\newtheorem{theorem}{Theorem}
\newtheorem{proposition}{Proposition}
\newtheorem{lemma}{Lemma}
\newtheorem{problem}{Problem}
\newtheorem{assumption}{Assumption}
\newcommand{\mycomment}[1]{}
\title{A Few Moments Please: Scalable Graphon Learning via Moment Matching}
\newcommand{\vt}[1]{{\color{blue}{[Victor: #1]}}}
\newcommand{\rr}[1]{{\color{Mahogany}{[Reza: #1]}}}
\newcommand{\sms}[1]{{\color{cyan}{[Santiago: #1]}}}
\author{%
Reza Ramezanpour\\
  Rice University\\
  \texttt{rr68@rice.edu}\\
  \And
  Victor M. Tenorio\\
  King Juan Carlos University\\
  \texttt{victor.tenorio@urjc.es}\\
  \AND
  Antonio G. Marques \\
  King Juan Carlos University\\
  \texttt{antonio.garcia.marques@urjc.es}\\
  \And
  Ashutosh Sabharwal\\
  Rice University\\
  \texttt{ashu@rice.edu}\\
  \And
  Santiago Segarra \\
  Rice University\\
  \texttt{segarra@rice.edu} \\
}
\begin{document}

\maketitle

\begin{abstract}
Graphons, as limit objects of dense graph sequences, play a central role in the statistical analysis of network data. 
However, existing graphon estimation methods often struggle with scalability to large networks and resolution-independent approximation, due to their reliance on estimating latent variables or costly metrics such as the Gromov-Wasserstein distance. 
In this work, we propose a novel, scalable graphon estimator that directly recovers the graphon via moment matching, leveraging implicit neural representations (INRs). Our approach avoids latent variable modeling by training an INR--mapping coordinates to graphon values--to match empirical subgraph counts (i.e., moments) from observed graphs.
This direct estimation mechanism yields a polynomial-time solution and crucially sidesteps the combinatorial complexity of Gromov-Wasserstein optimization. 
Building on foundational results, we establish a theoretical guarantee: when the observed subgraph motifs sufficiently represent those of the true graphon (a condition met with sufficiently large or numerous graph samples), the estimated graphon achieves a provable upper bound in cut distance from the ground truth. Additionally, we introduce MomentMixup, a data augmentation technique that performs mixup in the moment space to enhance graphon-based learning. 
Our graphon estimation method achieves strong empirical performance--demonstrating high accuracy on small graphs and superior computational efficiency on large graphs--outperforming state-of-the-art scalable estimators in 75\% of benchmark settings and matching them in the remaining cases. Furthermore, MomentMixup demonstrated improved graph classification accuracy on the majority of our benchmarks.
\end{abstract}

\section{Introduction}
\label{sec:intro}

Networks are fundamental structures for representing complex relational data across diverse domains, from social interactions and biological systems to technological infrastructures~\citep{dong2020gspml,tenorio2025tracking}.
Understanding the underlying principles governing these networks is crucial for tasks such as link prediction~\citep{rey2023robust}, community detection~\citep{su2024community}, and, more broadly, node or graph classification~\citep{rey2025redesigning}.
Graphons, or graph limits, have emerged as a powerful mathematical framework for capturing the asymptotic structure of sequences of dense graphs \citep{lovasz2006limits,lovasz2012large,borgs2008convergent,avella2020centrality}.
They provide a continuous, generative model for graphs, enabling principled statistical analysis and offering a canonical representation for large networks.
Graphons have been successfully applied to derive controllers for large networks~\citep{gao2020graphon}, to understand network games with many actors~\citep{parise2023graphon}, to perform data augmentation in graph settings~\citep{navarro2023graphmad,han2022gmixup_icml}, and to aid in the topology inference of partially observed graphs~\citep{roddenberry2021network,navarro2022joint}.
As such, developing accurate and efficient methods for estimating graphons from observed network data is a central problem in network science and machine learning.

Estimating graphons from finite, potentially noisy graph observations presents significant challenges.
Many existing approaches suffer from computational scalability issues when applied to large networks~\citep{chatterjee2015matrix,xu2021learning}.
Furthermore, their resolution is limited by the size of the sample graphs, and obtaining a resolution-free approximation of the underlying continuous graphon can be difficult~\citep{chatterjee2015matrix}.
For instance, implicit neural representations (INRs) have been explored for graphon estimation due to their ability to model continuous functions~\citep{xia2023implicit}.
However, estimating the latent variables of the nodes to train the INRs remains a challenge, and oftentimes the literature resorts to computationally demanding optimal transport-inspired losses, like the Gromov-Wasserstein (GW) distance for optimization.
While recent scalable methods have made progress~\citep{azizpour2025scalable}, there remains a need for estimators that combine high accuracy, computational efficiency, and direct graphon recovery without complex intermediate steps.

In this paper, we introduce a novel and scalable approach for graphon estimation via moment matching, designed to overcome these prevalent limitations. 
Our method directly recovers the graphon by leveraging subgraph counts (graph moments) from observed data, thereby bypassing the need for latent variables and their associated complexities.
We represent the graphon using an INR, a continuous function parameterized by a neural network that maps coordinates in $[0,1]^2$ to the corresponding graphon value. 
The parameters of this INR are learned by minimizing the discrepancy between the moments derived from the INR and the empirical moments computed from the input graph(s). 
This direct recovery strategy, crucially, leads to a polynomial-time estimation algorithm and does not rely on combinatorial GW distances, distinguishing it from approaches like IGNR~\citep{xia2023implicit}. 
Our approach is underpinned by a theoretical result, building upon foundational work on convergent graph sequences~\citep{borgs2008convergent}, which establishes that if the motifs (subgraph patterns) in the observed graph data sufficiently represent the motifs present in the true underlying graphon--a condition met with sufficiently large or numerous graph samples--then the cut distance between the estimated and true graphons is provably upper bounded. Additionally, we propose MomentMixup, a novel data augmentation technique that operates by interpolating graph moments between classes and then learning the corresponding mixed graphons, offering an improvement over existing mixup strategies in the graphon domain~\citep{navarro2023graphmad,han2022gmixup_icml}.

Our contributions are threefold:
\begin{enumerate}
    \item We propose MomentNet, a scalable graphon estimator based on moment matching with INR, offering a resolution-free and estimation recovery mechanism.
    \item We provide a theoretical guarantee linking the fidelity of motif representation in observed data to the estimation accuracy in terms of cut distance.
    \item We introduce MomentMixup, an effective data augmentation method in the moment space for graphon-based learning tasks.
\end{enumerate}

The remainder of this paper is structured as follows: Section~\ref{sec:preliminaries} presents the necessary background concepts and related works. Section~\ref{sec:method} details our moment-matching INR approach for graphon estimation, including its theoretical characterization. Section~\ref{sec:moment_mixup} introduces MomentMixup, our approach for data augmentation in graph classification tasks. Section~\ref{sec:exps} presents our comprehensive empirical evaluations in both synthetic graphon estimation and data augmentation for graph classification. Finally, Section~\ref{sec:conclusion} concludes the paper and discusses future directions.

\section{Background, Related Works and Problem Formulation}
\label{sec:preliminaries}

In this section, we introduce the foundational concepts of graphons, motif densities, INRs for graphon estimation, and mixup for data augmentation. 
We also formally state the graphon estimation problem addressed in this paper.
In all these topics, we provide a summary of the literature, although a detailed discussion of related works can be found in Appendix A.

\paragraph{Graphons}
A graphon, short for ``graph function,'' is a fundamental concept in the theory of graph limits, serving as a limit object for sequences of dense graphs \citep{lovasz2012large, borgs2008convergent}. 
Formally, a graphon $W$ is a symmetric measurable function $W: [0,1]^2 \to [0,1]$.
Intuitively, the unit interval $[0,1]$ can be thought of as a latent space for the graph nodes. 
For any two points $x, y \in [0,1]$ (representing latent positions), the value $W(x,y)$ represents the probability of an edge forming between nodes associated with these latent positions.

A random graph $G_n(W)$ with $n$ nodes can be generated from a graphon $W$ by sampling $n$ i.i.d. latent positions $\eta_1, \eta_2, \ldots, \eta_n \sim \mathcal{U}[0,1]$ and, for each pair of distinct nodes $(i,j)$ with $1 \le i < j \le n$, an edge $(i,j)$ is included in $G_n(W)$ independently with probability $W(\eta_i, \eta_j)$.
Graphons are inherently invariant to permutations of node labels in the generated graphs, meaning that different orderings of the latent positions $\eta_i$ that preserve their relative positions in $[0,1]$ (or more formally, measure-preserving bijections of $[0,1]$) lead to equivalent graphon representations. 
The natural distance metric capturing this invariance is the cut distance~\citep{borgs2008convergent}.

\paragraph{Motif Densities from Graphons}\label{sec:density_funcs}
A key property of graphons is their ability to characterize the expected density of small subgraphs, often called motifs~\citep{borgs2008convergent,lovasz2012large}. For a simple graph $F$ (the motif), whose node and edge set are represented by $\ccalV_F$ and $\ccalE_F$, respectively, with $k = |\ccalV_F|$, its homomorphism density in a graphon $W$, denoted $t(F,W)$, is defined as
\begin{equation}
\label{eq:motif_density}
t(F,W) = \int_{[0,1]^k} \prod_{(i,j) \in \ccalE_F} W(\eta_i, \eta_j) \prod_{l \in \ccalV_F} d\eta_l.
\end{equation}
This integral represents the probability that $k$ randomly chosen latent positions from $[0,1]$ induce a subgraph homomorphic to $F$ according to the edge probabilities defined by $W$.
For a sufficiently large graph $G$ sampled from $W$, the empirical count of motif $F$ in $G$, normalized appropriately, converges to $t(F,W)$. Thus, empirical motif densities from observed graphs can serve as estimators for the true motif densities of the underlying graphon. The set of all such motif densities $\{t(F,W)\}_{F \in \ccalF}$ (for some collection of motifs $\ccalF$) is often referred to as the moment vector of the graphon~\citep{borgs10moments}. 
We also introduce the induced motif densities as follows 
\begin{align}\label{eq:induced_motif_density}
     t^{\prime}(F, W) = \int_{[0,1]^k} \prod_{(i,j) \in \mathcal{E}_F} W(\eta_i, \eta_j)  \prod_{(i,j) \notin \mathcal{E}_F} (1-W(\eta_i, \eta_j)) \prod_{l \in \mathcal{V}_F} d\eta_i.
\end{align}

This formulation for induced motif density, $t^{\prime}(F, W)$, specifically counts instances where the motif $F$ appears in $W$ with an \emph{exact} structural match. This means it accounts for both the required presence of edges specified in $F$ and the required \emph{absence} of edges between the motif's vertices that are not in $F$. In contrast, a non-induced (or homomorphism) density $t(F,W)$ only requires the presence of edges from $F$ in $W$, without any assumption of the value of the graphon associated with pairs of nodes not linked by an edge.

\paragraph{Implicit Neural Representations for Graphon Estimation}
\label{sec:inr_background}
An INR can effectively model a graphon by learning it as a continuous function~\citep{xia2023implicit,azizpour2025scalable}.
In this setup, the INR, typically a neural network $f_\theta: [0,1]^2 \to [0,1]$, is trained to take pairs of latent node coordinates $(\eta_i, \eta_j)$ from a continuous space as input, where $\eta_i$ and $\eta_j$ represent the latent positions associated with entities $i$ and $j$.
Its output is the predicted value of the graphon $f_\theta (\eta_i, \eta_j) = \hat{W} (\eta_i, \eta_j)$, representing the probability of an edge existing between these two latent positions.
The network $f_\theta$ learns this mapping from observed samples, which could be $( (\eta_{i_l}, \eta_{j_l}), W(\eta_{i_l}, \eta_{j_l}) )$ pairs derived from a large graph or a target graphon function, for a set of sample indices $l$.
Crucially, because $f_\theta$ learns a continuous function over the entire input coordinate space defined by $\eta_\cdot$, the resulting graphon representation is inherently resolution-free.
This means it can determine the edge probability for any arbitrary pair of latent coordinates $(\eta_i, \eta_j)$, allowing for the generation or analysis of graph structures at any desired level of detail or scale without being tied to a fixed number of nodes or a specific discretization.

\paragraph{Mixup for Data Augmentation}
\label{sub:mixup_augmentation}

The core idea of Mixup \citep{zhang2017mixup} is to generate synthetic training examples by taking convex combinations of pairs of existing samples and their corresponding labels. Given two input samples $x_i$ and $x_j$ with their respective labels $y_i$ and $y_j$, a new synthetic sample $(\tilde{x}, \tilde{y})$ is created as $\tilde{x} = \lambda x_i + (1-\lambda) x_j$, $\tilde{y} = \lambda y_i + (1-\lambda) y_j$.
where $\lambda \in [0,1]$ is a mixing coefficient. 
This encourages the model to behave linearly in-between training examples, leading to smoother decision boundaries and improved generalization.

Applying Mixup directly to graph-structured data presents challenges because graphs are not inherently Euclidean objects. To perform Mixup for graphs, one typically first maps the graphs into a suitable Euclidean representation~\citep{navarro2023graphmad,han2022gmixup_icml}.
For example, GraphMAD~\citep{navarro2023graphmad} maps the graphs to a latent space and performs nonlinear mixup, while G-Mixup~\citep{han2022gmixup_icml} performs mixup in the graphon domain.
Once graphs $G_i$ and $G_j$ are available as Euclidean representations $\bbz_i$ and $\bbz_j$ respectively, a mixed representation $\tbz = \lambda \bbz_i + (1-\lambda) \bbz_j$ can be computed. The subsequent step, which can be non-trivial, is to generate a new graph $\tilde{G}$ from this mixed representation $\tbz$ that can be used for training a graph classification model.

\paragraph{Problem Formulation.}
The primary problem addressed in the graphon estimation literature, and in this work, is to recover the underlying graphon $W^*$ given one or more observed graphs.

\begin{problem}[Graphon Estimation]
\label{prob:graphon_estimation}
Given a set of observed graphs $\mathcal{G} = \{G_1, G_2, \ldots, G_P\}$, where each $G_p$ has $n_p$ vertices and is assumed to be sampled (conditionally independently) from an unknown true graphon $W^*$, i.e., $G_p \sim G_{n_p}(W^*)$, the goal is to estimate $W^*$.
\end{problem}
In the literature, early methods aimed at solving Problem~\ref{prob:graphon_estimation} by means of histogram estimators and stochastic block models~\citep{borgs2008convergent,lovasz2012large,xing2014consistent,airoldi2008mixed,gao2015rate}. Other non-parametric approaches, like Universal Singular Value Thresholding (USVT)~\citep{chatterjee2015matrix}, aimed to recover underlying network structures but often faced computational or resolution limitations. More recent scalable techniques include those using INRs. For instance, IGNR~\citep{xia2023implicit} often leverages GW distances~\citep{peyre2016gromov,xu2019gromov,xu2021learning} for alignment, while methods like SIGL~\citep{azizpour2025scalable} further advance INR-based estimation.

Our work proposes a novel method for solving Problem~\ref{prob:graphon_estimation} by directly learning an INR to match empirical moments (subgraph counts) from the observed graph(s), thereby bypassing latent variables and computationally expensive metric optimizations.
Moreover, we leverage our proposed solution to Problem~\ref{prob:graphon_estimation} to design MomentMixup, a novel mixup strategy for graph data augmentation.
MomentMixup performs mixup in the space of empirical moments, offering a novel way to generate augmented graph data informed by the underlying generative structure.

\section{Moment Matching Neural Network (MomentNet)}\label{sec:method}

In the following subsections, we introduce our proposed method, \textbf{MomentNet}, for learning the graphon. We also provide the fundamental theorem upon which our model is built.

\subsection{Methodology}\label{sub:method}

\begin{figure}[t]
\centering

\begin{tikzpicture}[
  node distance=0.8cm and 0.8cm,
  every node/.style={font=\small},
  box/.style={draw, rounded corners, minimum width=4.3cm, minimum height=5.2cm},
  inner box/.style={draw, rounded corners, fill=#1!10, minimum width=3.6cm, minimum height=6.5cm},
  motif/.style={circle, draw, fill=white, minimum size=3mm, inner sep=0pt}, scale=0.9
  ]

\node[box, minimum width=3.9cm] (a) at (-0.5,0) {};
\node at (-0.5,3.1) {(a) Observed graphs};

\node (graphonImg) at (-1.75,0.3) {\includegraphics[width=1.5cm]{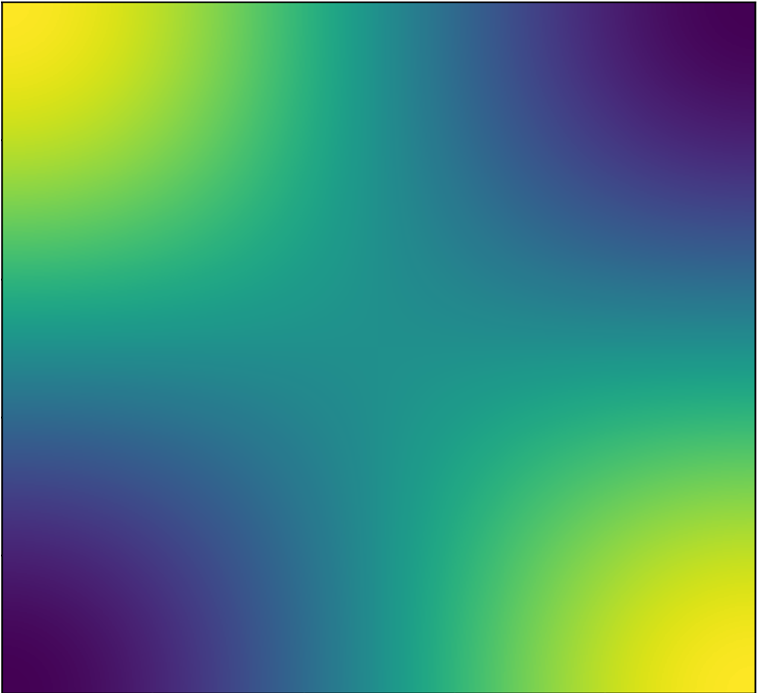}};
\node at (-1.75,-0.8) {\scriptsize \textbf{True graphon}};

\node[draw, inner sep=0.1pt] (sample1) at (0.45,1.6) {\includegraphics[width=1.8cm]{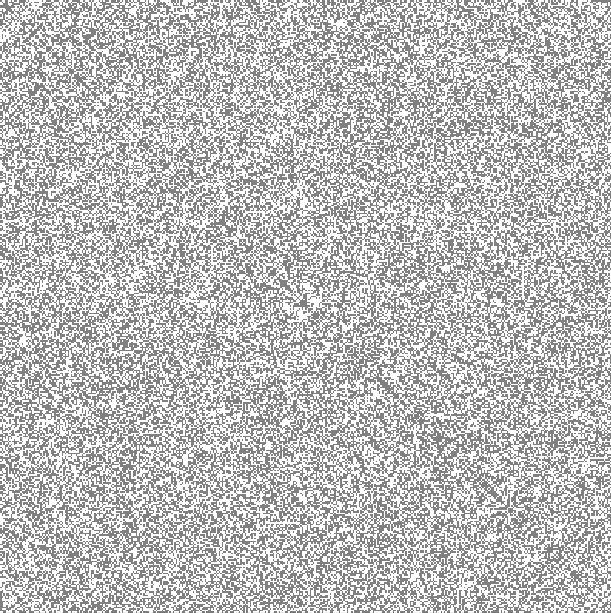}};

\node at (0.45,0.35) {$\vdots$};

\node[draw, inner sep=0.1pt] (sample2) at (0.45,-1.1) {\includegraphics[width=1.8cm]{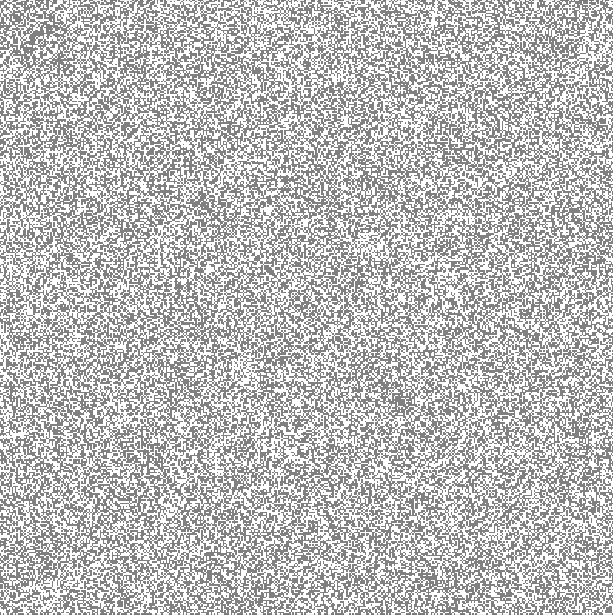}};
\node at (0.45,-2.4) {\scriptsize \textbf{Sampled graphs}};

\draw[->, thick, black!80] 
  ([xshift=-4pt]graphonImg.east) 
  to [out=10, in=175] 
  (sample1.west);

\draw[->, thick, black!80] 
  ([xshift=-4pt]graphonImg.east) 
  to [out=10, in=175] 
  (sample2.west);


\node[box] (b) at (4.2,0) {};
\node at (4.2,3.1) {(b) Density of Motifs};

\tikzset{
  circ/.style ={circle, draw, fill=#1,   inner sep=0pt, minimum size=1.4mm},
  tri/.style  ={regular polygon, regular polygon sides=3,
                draw, fill=#1,   inner sep=0pt, minimum size=1.7mm}
}


\begin{scope}[shift={(1.47,0)}]             
\node at (1.3,2.1) {\scriptsize $\textbf{G}_1$};

  \node[inner sep=0pt] (img1) at (1.2,1.2)
        {\includegraphics[width=1.3cm]{figs/graph1_sample.png}};

  \begin{scope}[shift={(img1.south west)}, xshift=0.12cm, yshift=0.28cm, scale=0.23]
    \node[circ=red!80] (a) at (0,0) {};
    \node[circ=red!80] (b) at (1,0) {};
    \draw[thick] (a)--(b);
  \end{scope}
  \begin{scope}[shift={(img1.south west)}, xshift=0.48cm, yshift=0.60cm, scale=0.23]
    \node[circ=red!80] (a) at (0,2) {};
    \node[circ=red!80] (b) at (0.7,2.6) {};
    \node[circ=red!80] (c) at (1.4,2) {};
    \draw[thick] (a)--(b)--(c);
  \end{scope}
  \begin{scope}[shift={(img1.south west)}, xshift=0.80cm, yshift=0.28cm, scale=0.23]
    \node[circ=red!80] (a) at (1.3,0) {};
    \node[circ=red!80] (b) at (2.2,0) {};
    \node[circ=red!80] (c) at (2.3,1) {};
    \draw[thick] (a)--(b)--(c)--(a);
  \end{scope}

  \node (vec1) at (1.2,-0.35) {\scalebox{0.42}{$
    \mathbf{m}^{(1)}=\!\Bigl[
    \begin{tikzpicture}[baseline={([yshift=-.5ex]current bounding box.center)},scale=0.4]
      \node[circ=red!80] (a) at (0,0) {};
      \node[circ=red!80] (b) at (0.55,0) {};
      \draw[thick] (a)--(b);
    \end{tikzpicture},
    \begin{tikzpicture}[baseline={([yshift=-.5ex]current bounding box.center)},scale=0.4]
      \node[circ=red!80] (a) at (0,0) {};
      \node[circ=red!80] (b) at (0.50,0.40) {};
      \node[circ=red!80] (c) at (1.0,0) {};
      \draw[thick] (a)--(b)--(c);
    \end{tikzpicture},
    \begin{tikzpicture}[baseline={([yshift=-.5ex]current bounding box.center)},scale=0.4]
      \node[circ=red!80] (a) at (0,0) {};
      \node[circ=red!80] (b) at (0.55,0) {};
      \node[circ=red!80] (c) at (0.28,0.48) {};
      \draw[thick] (a)--(b)--(c)--(a);
    \end{tikzpicture},\dots\Bigr]$}};

  \draw[->,thick,red!80] (img1.south) .. controls +(0,-0.22) and +(0,0.22) .. (vec1.north);
\end{scope}

\node[font=\Large\bfseries] at (4.2,1.2) {$\boldsymbol{\hdots}$};

\begin{scope}[shift={(4.2,0)}]  
\node at (1.6,2.1) {\scriptsize $\textbf{G}_P$};

  \node[inner sep=0pt] (img2) at (1.5,1.2)
        {\includegraphics[width=1.3cm]{figs/graph2_sample.png}};

  \begin{scope}[shift={(img2.south west)}, xshift=0.12cm, yshift=0.28cm, scale=0.23]
    \node[circ=blue!80] (a) at (4,0) {};
    \node[circ=blue!80] (b) at (5,1) {};
    \draw[thick] (a)--(b);
  \end{scope}
  \begin{scope}[shift={(img2.south west)}, xshift=0.48cm, yshift=0.60cm, scale=0.23]
    \node[circ=blue!80] (a) at (-1,-1) {};
    \node[circ=blue!80] (b) at (-0.3,-0.4) {};
    \node[circ=blue!80] (c) at (0.4, -1) {};
    \draw[thick] (a)--(b)--(c);
  \end{scope}
  \begin{scope}[shift={(img2.south west)}, xshift=0.80cm, yshift=0.28cm, scale=0.23]
    \node[circ=blue!80] (a) at (-1,4) {};
    \node[circ=blue!80] (b) at (-0.2,3) {};
    \node[circ=blue!80] (c) at (1,3.7) {};
    \draw[thick] (a)--(b)--(c)--(a);
  \end{scope}

  \node (vec2) at (1.5,-0.35) {\scalebox{0.42}{$
    \mathbf{m}^{(P)}=\!\Bigl[
    \begin{tikzpicture}[baseline={([yshift=-.5ex]current bounding box.center)},scale=0.4]
      \node[circ=blue!80] (a) at (0,0) {};
      \node[circ=blue!80] (b) at (0.55,0) {};
      \draw[thick] (a)--(b);
    \end{tikzpicture},
    \begin{tikzpicture}[baseline={([yshift=-.5ex]current bounding box.center)},scale=0.4]
      \node[circ=blue!80] (a) at (0,0) {};
      \node[circ=blue!80] (b) at (0.50,0.40) {};
      \node[circ=blue!80] (c) at (1.0,0) {};
      \draw[thick] (a)--(b)--(c);
    \end{tikzpicture},
    \begin{tikzpicture}[baseline={([yshift=-.5ex]current bounding box.center)},scale=0.4]
      \node[circ=blue!80] (a) at (0,0) {};
      \node[circ=blue!80] (b) at (0.55,0) {};
      \node[circ=blue!80] (c) at (0.28,0.48) {};
      \draw[thick] (a)--(b)--(c)--(a);
    \end{tikzpicture},\dots\Bigr]$}};

  \draw[->,thick,blue!80] (img2.south) .. controls +(0,-0.22) and +(0,0.22) .. (vec2.north);
\end{scope}

\node at (4.2,-1.7) {\scalebox{0.72}{$
  \mathbf{m}
  =\tfrac{1}{P}\sum_{p=1}^{P}\mathbf{m}^{(p)}
  =\!\Bigl[
  \begin{tikzpicture}[baseline={([yshift=-.5ex]current bounding box.center)},scale=0.45]
    \node[circ=black!65] (a) at (0,0) {};
    \node[circ=black!65] (b) at (0.60,0) {};
    \draw[thick] (a)--(b);
  \end{tikzpicture},
  \begin{tikzpicture}[baseline={([yshift=-.5ex]current bounding box.center)},scale=0.45]
    \node[circ=black!65] (a) at (0,0) {};
    \node[circ=black!65] (b) at (0.50,0.40) {};
    \node[circ=black!65] (c) at (1.0,0) {};
    \draw[thick] (a)--(b)--(c);
  \end{tikzpicture},
  \begin{tikzpicture}[baseline={([yshift=-.5ex]current bounding box.center)},scale=0.45]
    \node[circ=black!65] (a) at (0,0) {};
    \node[circ=black!65] (b) at (0.55,0) {};
    \node[circ=black!65] (c) at (0.28,0.48) {};
    \draw[thick] (a)--(b)--(c)--(a);
  \end{tikzpicture},\dots\Bigr]$}};

\draw[->,thick] (3.2,-0.75) .. controls +(0,-0.22) and +(0,0.25) .. (4.2,-1.5);
\draw[->,thick] (5.2,-0.75) .. controls +(0,-0.22) and +(0,0.25) .. (4.2,-1.5);

\node at (10.05,3.1) {(c) Moment network training};

\node[box, minimum width=6cm] (c) at (10.05,0) {};

\begin{scope}[shift={(7.15,1.6)}, scale=0.05]
\node (input) [xshift=0.4cm] {
  \scalebox{0.85}{$
    \left\{ \begin{array}{l}
    \eta^{(l)}_i \\
    \eta^{(l)}_j
    \end{array} \right\}_{l=1}^L$}
};

  \node (box) [draw=none, fill=blue!20, minimum width=0.8cm, minimum height=1.8cm, right of=input, xshift=0.8cm, rounded corners=4pt] 
        {$f_{\theta}$};

  \node (output) [right of=box, xshift=1.3cm] {\scalebox{0.9}{$\mathbf{s} = 
    \left\{f_\theta(\eta^{(l)}_i, \eta^{(l)}_j) \right\}_{l=1}^L$}};

  \draw[->] (input) -- (box);
  \draw[->] (box) -- (output);

  \node at ($(box.south) + (0, -4.8)$) {\textbf{INR}};
\end{scope}

\begin{scope}[shift={(7.3,-1)}, scale=0.05]
\node (input) [xshift=0cm] {$\mathbf{s}$};

  \node (box) [draw=none, fill=green!20, minimum width=1.5cm, minimum height=1cm, right of=input, xshift=0.7cm, rounded corners=4pt] 
        {\scriptsize\textbf{Moment estimator}};

  \node (wmse) [draw, fill=red!20, minimum height=1.7cm, right of=box, xshift=1.3cm, yshift=-0.4cm, rounded corners=4pt] {\scriptsize $\min_{\theta}$ \textbf{WMSE}};

\node (secondin) [left of=wmse, xshift=-0.6cm, yshift=-0.5cm] {\scriptsize $\mathbf{m}$};

\node (outputtheta) [right of=wmse, xshift=0.5cm, yshift=0cm] {\scriptsize $\theta_{\star}$};

\draw[->] (input) -- (box);
\draw[->] (box) -- ($(wmse.west)+(0,8.5)$);

\draw[->] (secondin) -- ($(wmse.west)+(0,-11)$);
\draw[->] (wmse) -- (outputtheta);
  
\end{scope}
\end{tikzpicture}
\caption{Graphon estimation pipeline: observed graphs lead to motif frequency extraction and INR-based recovery.}
\label{fig:MomentNet}
\end{figure}

We explain the two steps in our method to estimate the graphon $W$ given the set of sampled graphs denoted by $\ccalG = \left\{G_p\right\}_{p=1}^{P}$. 
A schematic view of our method is presented in Figure~\ref{fig:MomentNet}.

\paragraph{Step 1: Computing density of motifs.}
For each graph in our dataset $\mathcal{G}$, we count the occurrences of specific motifs. The density of an identified motif is then calculated as the ratio of its observed count to the total number of possible ways that particular motif could appear in a graph of the same size. We use the ORCA method~\cite{orca} to count the number of graphlets in the graph, and then we convert the graphlet count into motif counts.
This aggregation is needed because our analysis cares only about how often each subgraph pattern appears in total, not about the exact placement of individual nodes within those patterns; see Fig.~1 in~\cite{orca} for an illustration. 
We parallelize the use of the ORCA method for computing motif counts across the graphs in our dataset, thereby gaining a significant speed-up in processing time.
ORCA can count motifs with up to five nodes, and its method can be extended to handle larger motifs. Once these motif-based statistics are calculated from the graphs, we no longer use the graphs themselves for subsequent steps. 
This approach significantly reduces computational overhead. 
Mathematically, we consider a set $\mathcal{F}$ of $|\mathcal{F}|$ distinct motif types. For each graph $G_p$ in our dataset, its empirical motif density vector is $\mathbf{m}^{(p)} \in \mathbb{R}^{|\mathcal{F}|}$. The overall motif density vector $\mathbf{m}$ for the dataset is currently computed as the simple average:
\begin{align} \label{eq:simple_avg_motif_density}
    \mathbf{m} = \frac{1}{P} \sum_{p=1}^{P} \mathbf{m}^{(p)}.
\end{align}
While Eq.~\eqref{eq:simple_avg_motif_density} treats each graph equally, a more general approach could involve a weighted average, $\mathbf{m}_w = \sum_{p=1}^{P} w_p \mathbf{m}^{(p)}$ (where $w_p \ge 0$ and $\sum_{p=1}^{P} w_p = 1$). Such weights $w_p$ could, for example, depend on graph properties like size ($n_p$), potentially giving more influence to larger graphs, which might yield more stable density estimates. Our present work employs the simple average, with the exploration of weighted schemes as a potential future refinement.

\paragraph{Step 2: Training the Moment network}
The moment network is defined as a combination of INR with a moment-based loss function. This step consists of three components described as follows:
\begin{enumerate}
    \item \textbf{INR}: Our methodology employs an INR $f_\theta$ to model the graphon, that receives the latent coordinates $(\eta_i, \eta_j)$ and outputs the estimated graphon value $\hat{W}_\theta (\eta_i, \eta_j)$, as explained in Section~\ref{sec:inr_background}.
    
    \item \textbf{Moment estimator}:
    With the graphon estimated by the INR function $f_{\theta}$, we can compute the induced motif density for any given motif $F$. This is achieved by substituting $\hat{W}_{\theta}$ in place of $W$ in~\eqref{eq:induced_motif_density}.
    Since we can not compute the integral directly, we approximate it using Monte Carlo integration techniques. By generating a sufficient number of random samples $L$ from the distribution induced by the graphon, we can estimate the integral. More precisely, we sample $L$ samples of $k$ latent coordinates $\eta_1^{(l)}, \ldots, \eta_k^{(l)}$, where $\eta_i^{(l)} \sim \ccalU [0,1]$. Then we estimate \( t^{\prime}(F, \hat{W}_{\theta}) \) as
    \begin{align}\label{eq:mc_moment}
        \hat{t}^{\prime} (F, \hat{W}_{\theta}) = \frac{1}{L} \sum_{l=1}^{L} \left[ \prod_{(i,j) \in \mathcal{E}_F} \hat{W}_{\theta}(\eta^{(l)}_{i}, \eta^{(l)}_{j})  \prod_{(i,j) \notin \mathcal{E}_F} (1-\hat{W}_{\theta}(\eta^{(l)}_{i}, \eta^{(l)}_{j})) \right]
    \end{align}

    The Monte Carlo estimator $\hat{t}^{\prime}(F, \hat{W}_{\theta})$ is differentiable with respect to the INR parameters $\theta$. Since the INR $f_{\theta}$ is a neural network parameterized by $\theta$ (and thus differentiable with respect to $\theta$), the estimator $\hat{t}^{\prime}(F, \hat{W}_{\theta})$, which is constructed as an average of terms derived from $f_{\theta}$ outputs at fixed sample points $\boldsymbol{\eta}^{(l)}$, is consequently also differentiable with respect to $\theta$.
    This characteristic is vital as it allows the use of gradient-based optimization algorithms to train the INR parameters $\theta$ when this estimator is incorporated into a loss function. 
    A proof of unbiasedness for this approach, i.e., showing that $\mathbb{E}[\hat{t}^{\prime}(F, \hat{W}_{\theta})] = t^{\prime}(F, \hat{W}_{\theta})$, is provided in Appendix~\ref{app:unbiased}.
    The vector of estimated moments (e.g., motif densities) derived from the INR outputs is denoted as $\hat{\mathbf{m}}(\theta) = \left[ \hat{t}^{\prime}(F_1, \hat{W}_{\theta}), \hat{t}^{\prime}(F_2, \hat{W}_{\theta}), \ldots, \hat{t}^{\prime}(F_{|\mathcal{F}|}, \hat{W}_{\theta}) \right]^\top$. 
    \item \textbf{WMSE}:
    We use weighted mean squared error as a loss function to train our INR. Given the empirical moment vector $\mathbf{m}$, based on sampled graphs and computed using Eq.~\eqref{eq:simple_avg_motif_density}, and the estimated moments based on the INR as $\hat{\mathbf{m}}(\theta)$, the loss function is
        \begin{align} 
        L(\theta) = \sum_{i=1}^{|\mathcal{F}|} w_i \left( m_i - \hat{m}_i(\theta) \right)^2.
    \end{align}
   In our experiments, we adjust the importance of different factors by assigning weights ($w_i$). We calculate these weights as the inverse of how strong each factor ($m_i$) appears in our data ($w_i = \frac{1}{m_i}$). This weighting method balances the impact of each moment, preventing the most frequent ones (larger $m_i$) from having a large effect on the learning process.

\end{enumerate}

The training process described above, optimizing the parameters $\theta$ of the INR $f_\theta$ to minimize the weighted mean squared error between empirical and estimated motif densities, yields our final graphon estimate $\hat{W}_\theta = f_\theta$. This estimated graphon is inherently scale-free due to the continuous nature of the INR. Furthermore, the entire estimation procedure operates in polynomial time with respect to the number of nodes and motifs considered.
A detailed complexity analysis is provided in Appendix~\ref{app:complexity}.

\subsection{Theoretical characterization}
\label{sub:theory}

We present our main theorem bounding the cut distance between the true graphon $W^*$ and the graphon $\hat{W}_\theta$ estimated by our proposed INR. 
This result combines insights from the concentration of empirical motif densities in the $G_n (W)$ model~\cite{borgs2008convergent} with the inverse counting lemma relating motif distances to cut distance, and an assumption about the neural network's ability to approximate empirical motif densities. 
Supporting lemmas and the proof of this theorem are provided in Appendix~\ref{app:proof_theo}.

Let $G_1, \dots, G_p$ be $P$ graphs, each with $n$ vertices, sampled independently from the graphon model $G_n (W^*)$ according to the graphon $W^*$. 
The \textbf{empirical motif density} of $F$ based on these samples is $\bar{t} (F,W^*) = \frac{1}{P} \sum_{p=1}^P t(F,G_p)$, where in a slight abuse of notation we denote by $t(F,G_p)$ the motifs densities computed from the motif counts of graph $G_p$.

\mycomment{
\sms{ so the result is for $t$ but in the algorithm we use $t'$?} \vt{Yes, I agree that it is a bit weird, but in the experiments we are using $t'$, but all the results this is based on talk about $t$.} 
\sms{I do not understand the abuse of notation here. Which one is graph $G$? we do not have a graph $G$, right?} \vt{$t(F, W)$ receives as arguments a simple graph and a graphon, and it is defined as such in~\eqref{eq:motif_density}. It is not defined to receive a graph in its second argument (equation~\eqref{eq:motif_density} cannot be applied). $G$ was a generic graph, I have changed it to be $G_p$.}}

We consider an INR $f_\theta$ with parameters $\theta$, whose estimated graphon is denoted by $\hat{W}_\theta$. The motif densities corresponding to this estimated graphon are denoted by $\hat{t}_\theta(F, \hat{W}_\theta)$. The INR is trained to directly output $\hat{t}_\theta(F, \hat{W}_\theta)$ values that approximate the empirical densities $\bar{t} (F,W^*)$.
Finally, let $\mathcal{F}_k$ denote the set of all non-isomorphic simple graphs with exactly $k$ vertices and let $N_k = |\mathcal{F}_k|$ be the number of such graphs.

With the previous definitions, and those of Lemma~\ref{lem:motif_to_cut_distance} and Assumption~\ref{as:nn_perf} in Appendix~\ref{app:proof_theo}, we are in a position to present our main result, stated in Theorem~\ref{thm:main_cut_distance_bound}.

\mycomment{
\sms{A lot of redundancy between the text before the Theorem and the statement of the Theorem. No need to repeat everything. Also, you talk about assumptions that do not exist (presumably is in the appendix) ... do we need the assumption? if you are actually stating what that is. Also, instead of mentioning Lemma 2 first, present the equation (8) sooner and then say that these quantities appear in Lemma 2, but AFTER the equation. Otherwise, I need to read too much before I understand what the Theorem is about.}
\vt{I've removed some of the definitions previously stated. Also, I've left the assumption because I have a comment about it in the appendix that I think won't fit here, but I talk about the assumption in the previous paragraph now. I've also changed the order of the definitions and the result in the theorem, and left two possibilities, please choose whichever you think is better.}
}

\begin{theorem}[Cut Distance Bound for INR Estimated Graphons]
\label{thm:main_cut_distance_bound}
Assume $n > \frac{k(k-1)}{\delta_M}$, Assumption~\ref{as:nn_perf} holds, and
\begin{equation} \label{eq:sampling_cond}
N_k \cdot 2 \exp\left(-\frac{Pn}{4k^2}\left(\frac{\delta_M}{2} - \frac{k(k-1)}{2n}\right)^2\right) < \zeta,
\end{equation}
where $\zeta > 0$ is a desired confidence level and $\delta_M > 0$ is the motif deviation threshold defined in Lemma~\ref{lem:motif_to_cut_distance}.
Then, with probability at least $1-\zeta$, the cut distance between the neural network estimated graphon $\hat{W}_\theta$ and the true graphon $W^*$ is bounded by $\eta$ as
\begin{equation}
d_{\text{cut}}(\hat{W}_\theta, W^*) < \eta,
\end{equation}
where $\eta$ is also defined in Lemma~\ref{lem:motif_to_cut_distance}.
\end{theorem}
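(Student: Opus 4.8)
The plan is to chain three ingredients: a concentration bound for the empirical motif densities $\bar t(F,W^*)$ in the $G_n(W^*)$ model, the neural-network approximation guarantee of Assumption~\ref{as:nn_perf}, and the inverse counting Lemma~\ref{lem:motif_to_cut_distance}. Concretely: (i) control $|\bar t(F,W^*)-t(F,W^*)|$ uniformly over $F\in\mathcal{F}_k$ with probability at least $1-\zeta$; (ii) transfer this to $|\hat t_\theta(F,\hat W_\theta)-t(F,W^*)|$ via the triangle inequality and Assumption~\ref{as:nn_perf}; (iii) feed the resulting uniform motif-density proximity into Lemma~\ref{lem:motif_to_cut_distance} to obtain $d_{\text{cut}}(\hat W_\theta,W^*)<\eta$.

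For step (i), fix $F$ with $k=|\mathcal{V}_F|$ vertices. First I would bound the \emph{bias}: for a single draw $G_p\sim G_n(W^*)$, the expectation of $t(F,G_p)$ over both the latent positions and the conditionally independent edge indicators differs from $t(F,W^*)$ by at most $\tfrac{k(k-1)}{2n}$; this is the standard finite-sample correction, obtained by a union bound over the $\binom{k}{2}$ vertex pairs of $F$, each of which ``collides'' under uniform sampling from $n$ nodes with probability at most $1/n$. Next I would bound the \emph{fluctuation} of $\bar t(F,W^*)=\tfrac1P\sum_{p=1}^P t(F,G_p)$ around its mean. Viewing $\bar t(F,W^*)$ as a function of the $Pn$ i.i.d.\ latent coordinates together with the conditionally independent edge variables, each latent coordinate perturbs $\bar t$ by $O\!\big(\tfrac{k}{Pn}\big)$ (it lies in an $O(k/n)$ fraction of the $k$-subsets of one of the $P$ graphs), and each edge variable by a lower-order amount; McDiarmid's bounded-differences inequality then yields a sub-Gaussian tail $2\exp(-\Theta(Pn\varepsilon^2/k^2))$, and tracking the constants produces the factor $\tfrac{Pn}{4k^2}$ in~\eqref{eq:sampling_cond}. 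Taking $\varepsilon=\tfrac{\delta_M}{2}-\tfrac{k(k-1)}{2n}$, which is strictly positive precisely because the hypothesis $n>\tfrac{k(k-1)}{\delta_M}$ holds, and combining bias with fluctuation gives $\Pr{|\bar t(F,W^*)-t(F,W^*)|\ge\delta_M/2}\le 2\exp\!\big(-\tfrac{Pn}{4k^2}(\tfrac{\delta_M}{2}-\tfrac{k(k-1)}{2n})^2\big)$.

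Step (ii) is then a union bound over the $N_k$ motifs of $\mathcal{F}_k$: by hypothesis~\eqref{eq:sampling_cond} the right-hand side summed over $\mathcal{F}_k$ is below $\zeta$, so with probability at least $1-\zeta$ we have $|\bar t(F,W^*)-t(F,W^*)|<\delta_M/2$ simultaneously for all $F\in\mathcal{F}_k$. On this event, Assumption~\ref{as:nn_perf} ensures that the trained INR reproduces the empirical densities closely enough that, together with the $\delta_M/2$ guaranteed by the concentration step, the triangle inequality gives $|\hat t_\theta(F,\hat W_\theta)-t(F,W^*)|\le |\hat t_\theta(F,\hat W_\theta)-\bar t(F,W^*)|+|\bar t(F,W^*)-t(F,W^*)|<\delta_M$ for every $F\in\mathcal{F}_k$. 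This is exactly the hypothesis of Lemma~\ref{lem:motif_to_cut_distance}, whose conclusion $d_{\text{cut}}(\hat W_\theta,W^*)<\eta$ finishes the proof.

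I expect the main obstacle to be step (i): pinning down the concentration constants requires cleanly separating and then recombining the two sources of randomness in $G_n(W^*)$---the latent positions and, conditionally, the edges---establishing the bias term with the sharp $\tfrac{k(k-1)}{2n}$ constant, and verifying that averaging over the $P$ independent graphs upgrades the exponent from order $n$ to order $Pn$. The remaining steps are routine: one union bound, one triangle inequality, and one invocation of each of Assumption~\ref{as:nn_perf} and Lemma~\ref{lem:motif_to_cut_distance}.
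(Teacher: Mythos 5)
Your proof has the same top-level structure as the paper's: a triangle-inequality split of $|\hat t_\theta(F,\hat W_\theta)-t(F,W^*)|$ into a neural-network approximation term and an empirical-concentration term, each controlled to $\delta_M/2$, followed by a union bound over $\mathcal F_k$ and one application of Lemma~\ref{lem:motif_to_cut_distance}. Steps (ii)--(iii), the bias constant $\tfrac{k(k-1)}{2n}$, the role of $n>\tfrac{k(k-1)}{\delta_M}$, the choice $\epsilon_a=\delta_M/2$, and the failure-probability accounting all match the paper exactly.

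Where you diverge is in how the concentration bound in step (i) is obtained. The paper packages this as Lemma~\ref{lem:empirical_concentration_main} and proves it by importing a single-graph concentration inequality (Borgs et al., Lemma~4.4, giving a $2\exp(-n\delta^2/(4k^2))$ tail), reading off a sub-Gaussian variance proxy $\sigma^2=2k^2/n$ for $t(F,G_p)-t(F,W^*)$, and then invoking the standard fact that the mean of $P$ i.i.d.\ $\sigma^2$-sub-Gaussian variables is $(\sigma^2/P)$-sub-Gaussian. You instead propose a direct McDiarmid bounded-differences argument on all $Pn$ latent coordinates (plus the conditionally independent edge indicators) simultaneously, estimating the perturbation of $\bar t$ from one latent coordinate as $O(k/(Pn))$. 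Both routes give an exponent of order $Pn\delta^2/k^2$. The paper's route is more modular: the single-graph concentration is delegated to the literature and the averaging step is a one-line sub-Gaussian fact, avoiding the need to re-argue the bounded-differences structure of $t(F,\cdot)$ or to disentangle the two layers of randomness (latents and edges) yourself. Your route is more self-contained but, as you note, requires exactly the bookkeeping the paper avoids; in particular McDiarmid's inequality in its basic form wants a single collection of independent variables, so a careful application would need to condition on the latents (or use a martingale/Azuma formulation) to handle the edge variables, and the constants would have to be tracked to recover the $\tfrac{Pn}{4k^2}$ factor that the theorem statement uses.
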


\mycomment{
\begin{theorem}[Cut Distance Bound for INR Estimated Graphons]
\label{thm:main_cut_distance_bound}
Let $\delta_M > 0$ and $\eta$ be the motif deviation threshold and cut distance upper bound defined in Lemma~\ref{lem:motif_to_cut_distance}, respectively.
Also, let $1-\zeta > 0$ (where $0 < \zeta < 1$) be a desired confidence level, and assume $n > \frac{k(k-1)}{\delta_M}$, Assumption~\ref{as:nn_perf} holds, and
\begin{equation} \label{eq:sampling_cond}
N_k \cdot 2 \exp\left(-\frac{Pn}{4k^2}\left(\frac{\delta_M}{2} - \frac{k(k-1)}{2n}\right)^2\right) < \zeta.
\end{equation}
Then, with probability at least $1-\zeta$, the cut distance between the neural network estimated graphon $\hat{W}_\theta$ and the true graphon $W^*$ is bounded by $\eta$
\begin{equation}
d_{\text{cut}}(\hat{W}_\theta, W^*) < \eta.
\end{equation}
\end{theorem}
}

A detailed proof of Theorem~\ref{thm:main_cut_distance_bound}, along with necessary definitions and supporting lemmas, can be found in Appendix~\ref{app:proof_theo}.
This result demonstrates that if the INR can accurately approximate the empirical motif densities (Assumption~\ref{as:nn_perf}), and if enough data (characterized by $P$ and $n$) is available to ensure the empirical motif densities are close to the true graphon motifs (Lemma~\ref{lem:empirical_concentration_main}), then the estimated graphon is likely to be close to the true graphon in cut distance.

Although condition~\eqref{eq:sampling_cond} may seem restrictive, note that (i) it decays exponentially with the number of graphs $P$ and their size $n$ considered, so it can be made arbitrarily small by considering larger datasets and (ii) although it increases with $k$ (and therefore with $N_k$), the size of the subgraphs considered $k$ is usually small (up to 5 nodes at most).
\mycomment{
\sms{Does it really increase with $k$? if you expand the square in the exponent and put the $1/k^2$ factor in ... do you have an increasing function with $k$?} \vt{$N_k$ grows super-exponentially with $k$ I think as the number of possible non-isomorphic graphs explodes, so I believe so.}
}

\section{Moment Mixup}
\label{sec:moment_mixup}

Data augmentation is a crucial technique in machine learning, particularly in domains like graph learning, where labeled data can be scarce or expensive to obtain~\cite{ding2022dataaugmentationdeepgraph}. 
By synthetically expanding the training dataset with new, plausible examples, data augmentation helps to improve model generalization, reduce overfitting, and enhance robustness. 
In the context of graph learning, developing effective augmentation strategies is challenging due to the complex, non-Euclidean nature of graph data, where direct analogies to image or text augmentation methods are not always feasible.


In this section, we introduce MomentMixup, a novel approach for data augmentation in graph learning. The process begins by generating novel moment profiles through convex combinations of moment vectors, where each vector $\mathbf{m}_k$ is derived from sampled graphs belonging to a distinct graph class (e.g., $\mathbf{m}_{\text{new}} = \sum \alpha_k \mathbf{m}_k$, with $\alpha_k \ge 0, \sum \alpha_k = 1$).
This interpolated moment vector, $\mathbf{m}_{\text{new}}$, is then used as the input to MomentNet, which subsequently defines a new graphon distribution, $W_{\text{new}}(\eta_i,\eta_j)$, consistent with these synthesized moments.
Finally, new graphs are sampled from $W_{\text{new}}(\eta_i,\eta_j)$ and integrated into the training set. The pseudocode of MomentMixup is provided in Algorithm 1 in Appendix F.

\begin{proposition}\label{prop:props1}
    A convex combination of graphons is not equivalent to the corresponding convex combination of their vectors of moments, with the exception of the edge density moment.
\end{proposition}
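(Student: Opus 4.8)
The plan is to prove a clean dichotomy: the map $W \mapsto t(F,W)$ commutes with convex combinations exactly when $F$ is a single edge, and for every motif with two or more edges there are graphons witnessing failure. First, the edge-density moment: if $K_2$ denotes the two-vertex, one-edge motif, then by \eqref{eq:motif_density} we have $t(K_2,W) = \int_{[0,1]^2} W(x,y)\,dx\,dy$, so for a convex combination $W_{\mathrm{new}} = \sum_{k=1}^K \alpha_k W_k$ (with $\alpha_k\ge 0$, $\sum_k\alpha_k=1$) linearity of the integral immediately gives $t(K_2,W_{\mathrm{new}}) = \sum_k \alpha_k\, t(K_2,W_k)$. (The same computation covers any single-edge motif together with isolated vertices, since integrating over the isolated coordinates contributes a factor $1$.) So in the edge-density coordinate the mixture of graphons matches the mixture of moment vectors, for every choice of graphons and weights.

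For the converse, fix a motif $F$ with $a := |\ccalE_F| \ge 2$. Substituting $W_{\mathrm{new}}$ into \eqref{eq:motif_density} and expanding the product over the edges,
\[
t(F,W_{\mathrm{new}}) \;=\; \int_{[0,1]^{|\ccalV_F|}} \prod_{(i,j)\in\ccalE_F}\Big(\textstyle\sum_{k=1}^K \alpha_k W_k(\eta_i,\eta_j)\Big)\,\prod_{l\in\ccalV_F} d\eta_l \;=\; \sum_{\phi:\ccalE_F\to\{1,\dots,K\}} \Big(\prod_{e\in\ccalE_F}\alpha_{\phi(e)}\Big)\, I_\phi,
\]
where $I_\phi$ is the integral in which edge $e$ carries the factor $W_{\phi(e)}$. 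The convex combination $\sum_k\alpha_k t(F,W_k)$ contains only the $K$ ``monochromatic'' summands $\phi\equiv k$; the extra ``polychromatic'' terms are what break linearity, and they are present precisely because $a\ge 2$. To turn this into a concrete inequality it suffices to exhibit one witness: take $K=2$, $\alpha_1=\alpha_2=\tfrac12$, and constant graphons $W_1\equiv p$, $W_2\equiv q$ with $p\neq q$. Then $W_{\mathrm{new}}\equiv\tfrac{p+q}{2}$, so $t(F,W_{\mathrm{new}}) = \big(\tfrac{p+q}{2}\big)^{a}$ while $\tfrac12 t(F,W_1)+\tfrac12 t(F,W_2) = \tfrac12(p^{a}+q^{a})$, and strict convexity of $x\mapsto x^{a}$ on $[0,1]$ for $a\ge 2$ gives $\big(\tfrac{p+q}{2}\big)^{a} < \tfrac12(p^{a}+q^{a})$. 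Hence the mixture graphon and the mixture of moment vectors disagree in every coordinate indexed by a motif with at least two edges, i.e.\ in all moments except the edge density.

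I do not expect a genuine obstacle here; the only point requiring care is the logical form of the claim — ``not equivalent'' means the identity fails for \emph{some} admissible graphons/weights (exhibited by the constant-graphon witness), whereas for the edge density it holds for \emph{all} of them. One cosmetic caveat: the empty-graph motifs have density identically $1$ and are affine in a trivial way, so if the moment vector is taken to include them the statement should be read as ``the edge density is the only nonconstant affine moment''; and since the method actually uses the induced density $t'$, it is worth noting that the same nonlinearity holds for $t'$ by the identical expansion argument. If a quantitative gap is desired, the path $P_3$ on three vertices gives the tidy formula $t(P_3,\alpha W_1+(1-\alpha)W_2) - \alpha\, t(P_3,W_1) - (1-\alpha)\, t(P_3,W_2) = -\alpha(1-\alpha)\int_0^1 (d_1(x)-d_2(x))^2\,dx$, where $d_k(x):=\int_0^1 W_k(x,y)\,dy$, which is $\le 0$ and vanishes only when $\alpha\in\{0,1\}$ or $d_1=d_2$ almost everywhere.
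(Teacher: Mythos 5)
Your proof is correct and rests on essentially the same idea as the paper's — constant graphons as a counter-example — but packaged more generally. The paper's proof exhibits a single motif (the three-vertex path, i.e.\ the ``$V$-shape'') and uses the \emph{induced} density $t'$, expanding $t'(F,W_\alpha)$ for $W_\alpha = \alpha W_1 + (1-\alpha)W_2$ and pointing to the non-cancelling ``mixed terms,'' then plugging in constants $W_1 \equiv p_1$, $W_2 \equiv p_2$ to get $p_\alpha^2(1-p_\alpha) \ne \alpha p_1^2(1-p_1) + (1-\alpha)p_2^2(1-p_2)$. You instead prove a genuine dichotomy for the homomorphism density $t$: linearity holds precisely for the single-edge motif, and for every motif with $a\ge 2$ edges the same constant-graphon pair fails via strict convexity of $x\mapsto x^a$. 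What your version buys is that the ``with the exception of the edge density'' clause is proved (you show edge density is the only nonconstant affine moment), whereas the paper only exhibits one offending motif. What the paper's version buys is that the counter-example is stated directly in terms of the induced density $t'$, which is the quantity the method actually optimizes; your proposal correctly notes that the identical expansion argument transfers to $t'$, but the paper handles that case head-on. Your $\phi$-coloring expansion is exactly the paper's ``mixed terms'' decomposition written out formally, and your $P_3$ identity $-\alpha(1-\alpha)\int_0^1(d_1(x)-d_2(x))^2\,dx$ is a nice quantitative refinement that the paper does not include. Your caveat about empty motifs being trivially affine is also a fair reading of the proposition's ``exception'' clause.
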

Proof of proposition~\ref{prop:props1} using a counterexample is provided in Appendix~\ref{app:proof_prop1}.
MomentMixup is developed based on the key insight from Proposition~\ref{prop:props1}. This foundational understanding distinguishes MomentMixup and offers it as an alternative to existing methods like G-Mixup~\citep{han2022gmixup_icml}. The core intuition underpinning MomentMixup is that newly generated graph samples should exhibit clear structural proximity to a specific class (i.e., similar motif counts), thereby ensuring the augmented data reinforces class-specific structural characteristics. We contend that this particular intuition, that a generated sample is structurally close to a target class, may not always be reliably achieved through G-Mixup's graphon interpolation strategy because of Proposition~\ref{prop:props1}.
\mycomment{
\sms{It is very hard to understand what this proposition means by just reading its statement. Maybe you can put some math in here? maybe relating something to the notation in the mixup section in the background? Maybe you can introduce first the algorithm and then introduce the proposition once you have the notation already laid out.} \rr{Does it sound better this way?}
}

\section{Numerical Experiments}
\label{sec:exps}
In this section, we evaluate the performance of \textbf{MomentNet} and \textbf{MomentMixup} using various synthetic and real-world datasets widely used in the literature.
The primary deep learning components of our experiments were executed on an Nvidia A100 GPU. Separately, empirical graph moments were computed using the ORCA toolkit~\cite{orca}, with its execution parallelized across an AMD EPYC 7742 64-Core Processor.

\subsection{MomentNet Evaluation}

To comprehensively evaluate our proposed \textbf{MomentNet}, we focus on two critical dimensions. First, we examine its effectiveness in the primary task of graphon estimation, determining how accurately it can capture the underlying distribution of graphs. 
Second, we address the practical applicability of our model by testing its scalability. 
This involves assessing its performance and runtime when applied to both large graphs (high number of nodes) and collections of smaller graphs, which are crucial considerations for real-world applications. 
We use $L=20000$, which is the number of samples to compute the density of moments of MomentNet using Eq.~\ref{eq:mc_moment} in both experiments.


\begin{figure}[ht] 
    \centering 

    \begin{subfigure}[b]{\textwidth} 
        \centering
        \includegraphics[width=0.8\linewidth]{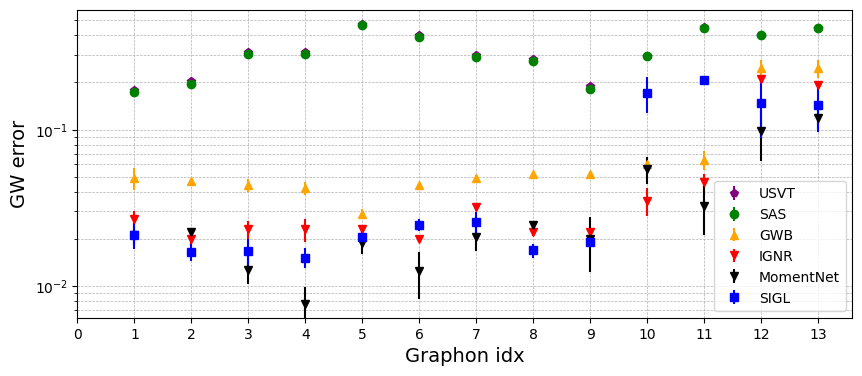} 
        \subcaption{Performance comparison of MomentNet compared to other graphon estimation approaches.}
        \label{fig:graphon_comparison} 
    \end{subfigure}

    \vspace{1em} 

    \begin{subfigure}[b]{0.48\textwidth} 
        \centering
        \includegraphics[width=\linewidth]{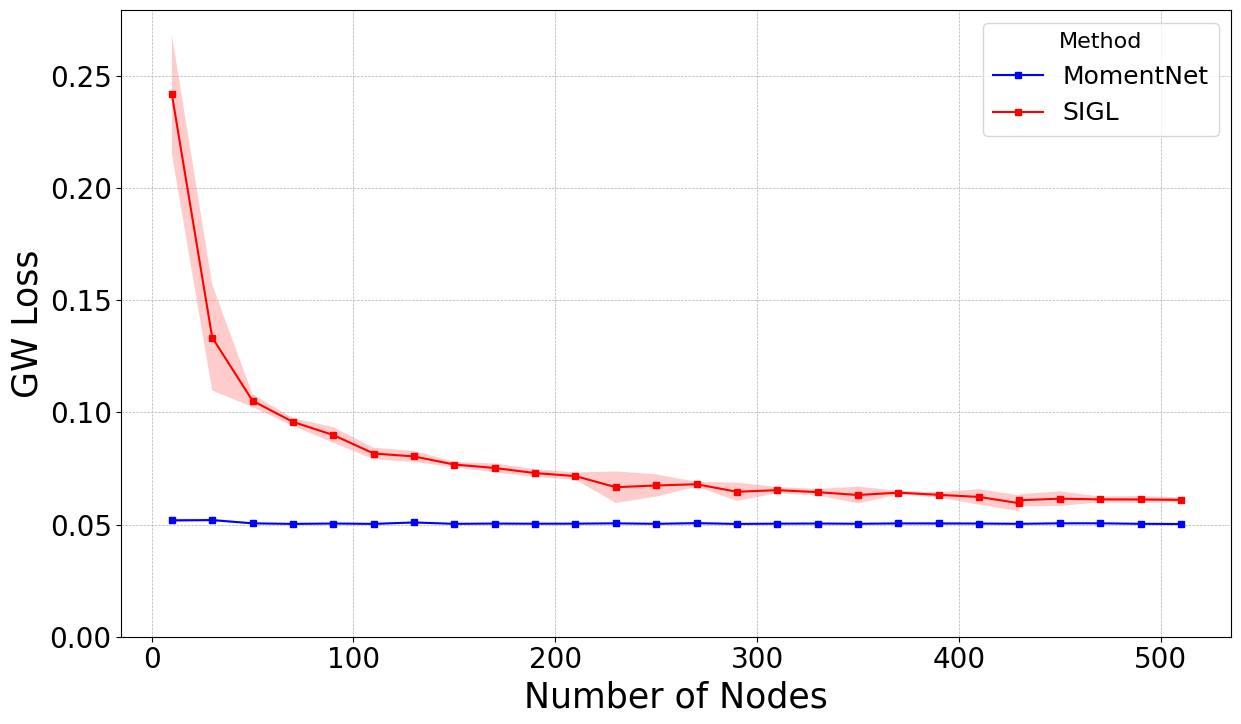} 
        \subcaption{Comparison of performance scalability of MomentNet with SIGL.}
        \label{fig:performance_scalability} 
    \end{subfigure}
    \hfill 
    \begin{subfigure}[b]{0.48\textwidth} 
        \centering
        \includegraphics[width=\linewidth]{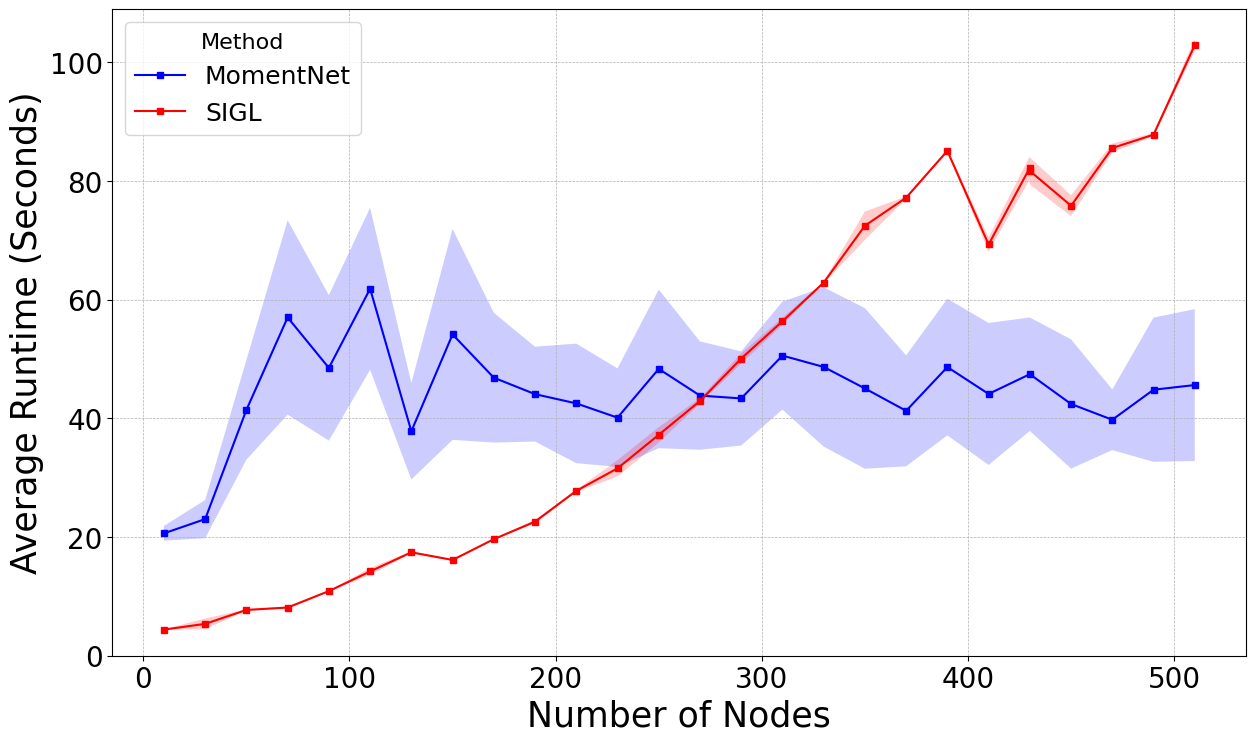} 
        \subcaption{Comparison of runtime scalability of MomentNet with SIGL.}
        \label{fig:runtime_scalability} 
    \end{subfigure}

    \caption{Overall comparison of MomentNet performance and scalability.}    \label{fig:overall_momentnet_comparison}
\end{figure}


\subsubsection{Graphon Estimation}

We use the 13 graphon distributions used in the literature~\cite{azizpour2025scalable, xia2023implicit}. The list of graphon distributions with their plot is provided in Appendix~\ref{app:graphon_table}.
To build our experimental dataset, we adopt the graph generation approach utilized in~\cite{xia2023implicit}.
From each graphon, we then generate 10 distinct graphs of varying sizes, specifically containing $\{75, 100, \hdots, 275, 300\}$ nodes respectively.
We utilize a one-layer MLP with 64 neurons as the INR to estimate the graphon.
For comparison, following INR training, we generate the graphon using $1000$ uniformly sampled equidistant latent variables over the interval $[0,1]$. 
The GW distance~\cite{gwb} is then computed between this estimate and the ground truth graphon.
For our method, we implemented the same steps described in section~\ref{sec:method}, considering the motifs provided in Fig. 4.
To evaluate the performance of our graphon estimation, we benchmark it against several established baseline methods. 
These include universal singular value thresholding (USVT)~\cite{chatterjee2015matrix}, sorting-and-smoothing (SAS)~\cite{xing2014consistent}, implicit graph neural representation (IGNR)~\cite{xia2023implicit}, Graph-Wasserstein barycenters (GWB)~\cite{xu2021learning}, and scalable implicit graphon learning (SIGL). 
For a consistent comparison, graphons estimated by the IGNR and SIGL baselines are sampled at a resolution of 1000, mirroring our own evaluation protocol.
Furthermore, for SAS and USVT, we zero-pad the adjacency matrices of the observed graphs to this target resolution of 1000 before their respective graphon estimation procedures are applied; this input processing strategy is similar to those employed in \cite{azizpour2025scalable, xia2023implicit}. 

The results are provided in Fig.~\ref{fig:graphon_comparison}. 
Based on the GW loss, our method outperforms the scalable state-of-the-art approach in 9 out of 13 graphons. 
Notably, our approach achieved superior results for graphons 10 and 11, where the state-of-the-art baseline (SIGL) struggled more. This difficulty might stem from the specific structures of graphons 10 and 11, which can challenge SIGL's reliance on accurately learning latent variables for its GNN-based node ordering and subsequent graphon estimation. 
Alongside the GW loss comparison, we assessed our graphon estimation via centrality measures, and the findings, detailed in Appendix~\ref{app:cent_measures}, affirmed our method's performance.

\subsubsection{Scalability Evaluation}

For experimental evaluations, we use the graphon $W(\eta_i, \eta_j) = 0.5 + 0.1 \cos(\pi \eta_i) \cos(\pi \eta_j)$ (Figure~\ref{fig:MomentNet}) for generating graph instances across multiple independent realizations for each node size $n \in \{10, \ldots, 510\}$. In each realization, 10 graphs of size $n$ are generated; MomentNet's target motif counts are averaged from these, while SIGL processes them according to its methodology. Reported performance metrics are averaged over these realizations, allowing methods to leverage a comprehensive set of samples.

The estimation error results (Figure~\ref{fig:performance_scalability}) show that MomentNet achieves strong performance, with error decreasing as $n$ increases. By leveraging multiple graph instances, MomentNet demonstrates near-optimal performance even on relatively small graphs, attributed to more accurate motif density estimation, aligning with theory.
In contrast, SIGL's error, while node-dependent, does not substantially improve from multiple graph instances, offering only slight gains for small graphs, resulting in inferior overall performance. A potential explanation is SIGL's reliance on accurate latent variable estimation. The specific graphon $W(\eta_i, \eta_j) = 0.5 + 0.1 \cos(\pi \eta_i) \cos(\pi \eta_j)$ (Figure~\ref{fig:MomentNet}) makes this challenging, as its construction ensures edge probabilities near $0.5$, leading to high variance ($W(1-W)$ near maximum). This high variance can obscure latent structure. MomentNet's averaging directly reduces density estimate variance. However, for SIGL, if each of the 10 graphs individually fails to resolve latent positions due to high variance, more such graphs may not overcome this limitation as effectively as methods that directly average structural features.

Regarding runtime (Figure~\ref{fig:runtime_scalability}), MomentNet's average runtime, despite variance, scales more favorably with increasing nodes compared to SIGL, showing a clear advantage for $n > 300$. The variance in MomentNet's runtime is due to its early stopping criteria (see Appendix~\ref{app:complexity} for detailed complexity analysis). Further experimental results are provided in Appendix~\ref{app:scale_exp}.

\mycomment{
For the experimental evaluations, we use the graphon $W(\eta_i, \eta_j) = 0.5 + 0.1 \cos(\pi \eta_i) \cos(\pi \eta_j)$ for generating graph instances; this graphon is visualized in Figure~\ref{fig:MomentNet}. We perform multiple independent realizations for each node size $n \in \{10, 20, \ldots, 510\}$. In each realization, 10 graphs of size $n$ are generated from this graphon. For MomentNet, the target motif counts for estimation are obtained by averaging the individual motif counts from these 10 graphs. SIGL also processes these 10 graphs according to its methodology. The performance metrics reported, such as estimation error and runtime, are averaged over these independent realizations. This approach allows the estimation methods to leverage a more comprehensive set of samples from the underlying graphon in each estimation run.

The experiment results for estimation error are provided in Figure~\ref{fig:performance_scalability}. MomentNet achieves strong performance, with its error decreasing as the number of nodes increases. By leveraging multiple graph instances, MomentNet demonstrates near-optimal performance even on relatively small graphs. This improvement is attributed to a more accurate estimation of motif densities, which aligns with the theoretical intuition that a larger sample size provided by both increasing graph size and the number of graph instances leads to improved performance for our method.

In contrast, while SIGL's error also shows a dependency on the node count, its performance does not gain substantial improvement from leveraging multiple graph instances; it offers only slight gains, primarily for small graphs. Consequently, SIGL's overall performance is inferior to that of MomentNet. A potential explanation for SIGL's limited improvement from multiple graphs is that its performance primarily relies on the accurate estimation of latent variables. The specific graphon $W(\eta_i, \eta_j) = 0.5 + 0.1 \cos(\pi \eta_i) \cos(\pi \eta_j)$ utilized in our experiments (see Figure~\ref{fig:MomentNet}) is an instance where this challenge is pronounced. Its construction ensures edge probabilities are consistently near $0.5$, leading to high variance in the edge generation process (where $W(1-W)$ approaches its maximum). Such high variance can obscure the underlying latent structure, thereby complicating the recovery of latent variables. Averaging motif counts, as done by MomentNet, directly reduces the variance of the density estimates. However, for SIGL, if each of the 10 graphs (of size $n$) generated from this high-variance graphon is individually insufficient to clearly resolve latent positions, then simply having more such graphs might not overcome this fundamental limitation as effectively as it helps methods directly averaging structural features. In such scenarios, merely having multiple graph instances might not sufficiently aid SIGL if the core difficulty lies in resolving latent positions from inherently noisy observations from each graph.

Regarding the runtime, Figure~\ref{fig:runtime_scalability} reveals that the average runtime of MomentNet, while exhibiting variance, scales more favorably with an increasing number of nodes compared to SIGL. Specifically, MomentNet demonstrates a clear runtime advantage over SIGL as the number of nodes becomes large (exceeding approximately $300$ in the depicted data). The notable variance in runtime for MomentNet, represented by the shaded areas, is attributed to the early stopping criteria employed during its training process. A more detailed analysis of computational complexity is provided in Appendix~\ref{app:complexity}.
}

\subsection{MomentMixup Evaluation}
To evaluate the performance of our \textbf{MomentMixup} framework, we conducted graph classification experiments on several real-world datasets: AIDS~\cite{AIDS}, IMDB-Binary~\cite{datasets}, IMDB-Multi~\cite{datasets}, and Reddit-Binary~\cite{datasets}. Detailed descriptions of these datasets are provided in Appendix~\ref{app:mixup_details}. To ensure a fair comparison with prior work, we adopted the same data splitting methodology as reported in previous literature~\cite{azizpour2025scalable, han2022gmixup_icml}.
For data augmentation, we treated $\alpha_{mix}, N_{\text{nodes}}, N_{\text{graph}}$, and $N_{\text{sample}}$ as hyperparameters in Algorithm 1. We employ the GIN architecture~\cite{GIN} as the graph classification model. 

Table~\ref{tab:mmixup_results} presents the model's accuracy on the test set.
The results demonstrate that our method achieves a better performance gain over the standard G-Mixup approach on three datasets.
%
As highlighted in the previous section, our method demonstrates a distinct advantage on datasets composed of smaller graphs, such as AIDS, where it notably outperforms techniques like SIGL. While our results on the Reddit-Binary dataset, which features very large graphs and where SIGL performs strongly, were influenced by the experimental choice of using a limited set of nine motifs, this contrast further illuminates a key insight: the optimal choice of mixup method can be highly dependent on graph characteristics, particularly size. Our approach appears particularly well-suited for capturing structural nuances in smaller graphs where fewer motifs can still provide rich representative information.

\begin{table*}[!htb]
    \centering
    \caption{\small{Classification accuracy of G-Mixup, MomentMixup, and baselines on different datasets.}}
    \label{tab:mmixup_results}
    \begin{tabular}{lcccc}
        \hline
        \textbf{Dataset} & \textbf{IMDB-B} & \textbf{IMDB-M} & \textbf{REDD-B} & \textbf{AIDS} \\
        \hline
        \#graphs     & 1000 & 1500 & 2000 & 2000  \\
        \#classes    & 2 & 3 & 2 & 2 \\
        \#avg.nodes  & 19.77 & 13.00 & 429.63 & 15.69 \\
        \#avg.edges  & 96.53 & 65.94 & 497.75 & 16.2  \\
        \hline
        \multicolumn{5}{c}{\textbf{GIN}}\\
        \hline
        No Augmentation                & 71.55$\pm$3.53 & 48.83$\pm$2.75 & 91.78$\pm$1.09  & 98$\pm$1.2 \\
        G-Mixup {w/ USVT}     & 71.94$\pm$3.00 & 50.46$\pm$1.49 & 91.32$\pm$1.51  & 97.8$\pm$0.9 \\
        G-Mixup {w/ SIGL}     & 73.95$\pm$2.64 & 50.70$\pm$1.41 & \textbf{92.25}$\pm$1.41  & 97.3$\pm$1 \\
        MomentMixup           & \textbf{74.30}$\pm$2.70 & \textbf{50.95}$\pm$1.93  & 91.8 $\pm$ 1.2 & \textbf{98.5}$\pm$0.6 \\
        \hline
    \end{tabular}
\end{table*}

\section{Conclusions}
\label{sec:conclusion}

In this paper, we introduced a novel, scalable graphon estimator leveraging INRs via direct moment matching, called MomentNet.
This approach bypasses latent variables and costly GW optimizations, offering a theoretically grounded, polynomial-time solution for estimating graphons from empirical subgraph counts, with provable guarantees on its accuracy.
We further proposed MomentMixup, a new data augmentation technique that performs mixup in the moment space, then obtains the estimated graphon using MomentNet, and finally samples new graphs from this graphon.
Our empirical results validate the effectiveness of our estimator, demonstrating superior or comparable performance against state-of-the-art methods in graphon estimation benchmarks, and show that MomentMixup improves graph classification accuracy by generating structurally meaningful augmented data.

Despite its strengths, our method's reliance on a pre-selected set of moments for graphon estimation is a limitation; performance can degrade if these moments are insufficient or noisy. Additionally, modeling a single graphon (per class for MomentMixup) may not capture highly heterogeneous graph data. Future work could address these by developing adaptive moment selection techniques and exploring extensions to learn mixtures of graphons. Further enhancements include adapting our moment-based approach for attributed or dynamic networks and integrating feature learning into the estimation process, broadening the applicability of our framework.

\section{Acknowledgments}
\label{sec:acks}
This research was sponsored by the National Science Foundation under award CCF-234048; by the Spanish AEI (10.13039/501100011033) under Grants PID2022-136887NBI00 and FPU20/05554; and by the Community of Madrid within the ELLIS Unit Madrid framework and the IDEA-CM (TEC-2024/COM-89) and CAM-URJC F1180 (CP2301) grants.

\bibliographystyle{apalike}
\bibliography{citations}


\appendix


\section{Detailed Related Work} \label{app:detailed_related_work}

\paragraph{Graphon Estimation}
Graphon estimation aims to recover the underlying generative structure of observed networks. Classical approaches include methods based on histogram estimators by partitioning nodes according to degree or other structural properties~\citep{borgs2008convergent,lovasz2012large,xing2014consistent}, and fitting stochastic block models (SBMs) or their variants, which can be viewed as piecewise constant graphon estimators~\citep{airoldi2008mixed,gao2015rate}. Universal singular value thresholding (USVT)~\citep{chatterjee2015matrix} offers a non-parametric approach for estimating graphons from a single adjacency matrix, particularly effective for low-rank structures. However, many of these methods face challenges in terms of computational cost for large graphs, achieving resolution-free approximation, or may rely on specific structural assumptions (e.g., piecewise constant for SBMs).

More recently, scalable graphon estimation techniques have gained prominence. For example, some works aim at minimizing distances between graph representations but often involve computationally expensive metrics like the GW distance~\citep{peyre2016gromov,xu2019gromov,xu2021learning}, which can be a bottleneck for large networks. The advent of INRs has opened new avenues for continuous, resolution-free graphon estimation. 
For instance, IGNR (Implicit Graphon Neural Representation)~\citep{xia2023implicit} proposed to directly model graphons using neural networks, enabling the representation of graphons up to arbitrary resolutions and efficient generation of arbitrary-sized graphs. IGNR also addresses unaligned input graphs of different sizes by incorporating the Gromov-Wasserstein distance in its learning framework, often within an auto-encoder setup for graphon learning. 
Subsequently, SIGL (Scalable Implicit Graphon Learning)~\citep{azizpour2025scalable} further advanced INR-based graphon estimation by combining INRs with Graph Neural Networks (GNNs). 
In SIGL, GNNs are utilized to improve graph alignment by determining appropriate node orderings, aiming to enhance scalability and learn a continuous graphon at arbitrary resolutions, with theoretical results supporting the consistency of its estimator.
While these INR-based techniques offer significant advantages in terms of resolution-free representation and handling unaligned data, they still implicitly involve latent variable modeling or rely on GW-like objectives for alignment.
Our proposed method builds upon the representational power of INRs but distinguishes itself by directly recovering the graphon via moment matching. 
This avoids the need for latent variables, complex metric computations like GW, and provides a theoretically grounded estimation framework that naturally handles multiple observed graphs by matching aggregated empirical moments.

\paragraph{Data Augmentation for Graph Classification}
Data augmentation is crucial for improving the generalization of GNNs and other graph learning models, especially when labeled data is scarce.
Mixup~\citep{zhang2017mixup}, which creates synthetic examples by linearly interpolating pairs of samples and their labels, has shown remarkable success in various domains. 
Its adaptation to graph data has been explored through several avenues, addressing challenges such as varying node counts, lack of alignment, and the non-Euclidean nature of graphs. 
For instance, \citet{wang2021mixup} proposed interpolating hidden states of GNNs.
Particularly relevant to our work are G-Mixup~\citep{han2022gmixup_icml} and GraphMAD~\citet{navarro2023graphmad}, which recognize the difficulties of direct graph interpolation and propose to augment graphs for graph classification by operating in the space of graphons.
GraphMAD~\citet{navarro2023graphmad} projects graphs into the latent space of graphons and implements nonlinear mixup strategies like convex clustering.
G-Mixup~\citep{han2022gmixup_icml} first estimates a graphon for each class of graphs from the training data. 
Then, instead of directly manipulating discrete graph structures, G-Mixup interpolates these estimated graphons of different classes in their continuous, Euclidean representation to obtain mixed graphons. 
Synthetic graphs for augmentation are subsequently generated by sampling from these mixed graphons. 
This technique has also been adopted as an augmentation strategy in the evaluation pipelines of some graphon estimation studies for downstream tasks~\citep{azizpour2025scalable}.


\section{Proof of Theorem~\ref{thm:main_cut_distance_bound}}\label{app:proof_theo}

\subsection{Supporting Lemmas}

We rely on the following established and derived results. Lemma~\ref{lem:empirical_concentration_main} is an original contribution of this work, while Lemma~\ref{lem:motif_to_cut_distance} is Theorem 3.7 (b) in~\citet{borgs2008convergent} and it is included here for completeness.

\begin{lemma}[Concentration of Empirical Motifs]
\label{lem:empirical_concentration_main}
Let $F$ be a simple graph with $k = |\ccalV_F|$ vertices. For $P \ge 1$ graphs $G_1, \ldots, G_P$, each sampled independently from $G_n (W^*)$, and for any error tolerance $\epsilon_s > 0$, the probability that the empirical motif density $\bar{t} (F,W^*) = \frac{1}{P} \sum_{p=1}^P t(F,G_p)$ deviates from the true motif density $t(F, W^*)$ is bounded as
\begin{equation}
\mathbb{P}[|\bar{t} (F,W^*) - t(F,W^*)| \geq \epsilon_s] \leq 2 \exp\left(-\frac{Pn}{4k^2} \left(\epsilon_s - \frac{k(k-1)}{2n}\right)^2 \right),
\end{equation}
for $\epsilon_s > \frac{k(k-1)}{2n}$.
\end{lemma}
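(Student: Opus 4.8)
The goal is to prove Lemma~\ref{lem:empirical_concentration_main}, the concentration of the averaged empirical motif density $\bar t(F,W^*)=\frac1P\sum_{p=1}^P t(F,G_p)$ around the true homomorphism density $t(F,W^*)$. The plan is to decompose the deviation into two pieces via the triangle inequality: a \emph{bias} term measuring how far $\mathbb E[t(F,G_p)]$ sits from $t(F,W^*)$, and a \emph{fluctuation} term measuring the concentration of the average around its mean. The bias is controlled by the well-known fact that the subgraph-sampling density $t(F,G_n(W))$ (the probability that $k$ uniformly chosen \emph{distinct} nodes span a copy of $F$) differs from the homomorphism density $t(F,W)$ only because of the repeated-coordinate events in the integral~\eqref{eq:motif_density}; the probability that two of $k$ i.i.d.\ uniform indices coincide among $n$ nodes is at most $\binom{k}{2}/n = k(k-1)/(2n)$, which bounds $|\mathbb E[t(F,G_p)] - t(F,W^*)|$ by $k(k-1)/(2n)$. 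This is a standard estimate from~\citet{borgs2008convergent} (their Lemma 2.1 / the counting-lemma preliminaries).

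For the fluctuation term, I would invoke a bounded-differences (McDiarmid) argument. View each graph $G_p$ as generated from $n$ latent variables $\eta^{(p)}_1,\dots,\eta^{(p)}_n$ together with the independent edge coin flips; altogether $\bar t(F,W^*)$ is a function of $Pn$ independent latent variables (the edge randomness can be absorbed, or handled by first conditioning on the latents and applying Hoeffding to the conditionally independent edge indicators, then McDiarmid on the latents — either route works). Changing a single latent coordinate $\eta^{(p)}_i$ alters $t(F,G_p)$ by at most $O(k/n)$, since that node participates in a $k/n$-fraction of the relevant $k$-subsets and $t(F,\cdot)\in[0,1]$; hence it changes $\bar t$ by at most $c := \frac{2k}{Pn}$ (the constant is exactly what is needed to land the $\frac{Pn}{4k^2}$ factor in the exponent, since McDiarmid gives $2\exp(-2\epsilon^2/(Pn c^2)) = 2\exp(-\frac{Pn\,\epsilon^2}{2k^2})$ up to the constant bookkeeping). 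Combining: on the event that the fluctuation is below $\epsilon_s - \frac{k(k-1)}{2n}$, the triangle inequality forces $|\bar t(F,W^*)-t(F,W^*)| < \epsilon_s$, so
\begin{equation}
\mathbb P\big[\,|\bar t(F,W^*)-t(F,W^*)|\ge \epsilon_s\,\big]\;\le\; 2\exp\!\left(-\frac{Pn}{4k^2}\Big(\epsilon_s-\frac{k(k-1)}{2n}\Big)^2\right),
\end{equation}
valid precisely when $\epsilon_s > \frac{k(k-1)}{2n}$ so that the effective deviation is positive.

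The main obstacle is getting the bounded-difference constant sharp enough to produce exactly the stated exponent, and making the two-level randomness (latent variables plus edge flips) rigorous: naively, $t(F,G_p)$ as a function of the raw edge set has large individual-edge sensitivity, so one should either condition on latents first (Hoeffding over $\binom n2$ conditionally independent edges, whose effect on a normalized $k$-subgraph count is $O(1/n^2)$ each) and then run McDiarmid over the $Pn$ latents, or cite the version of the sampling concentration already packaged in~\citet{borgs2008convergent} (their Theorem 2.5 / Azuma-type bound for $t(F,G(n,W))$) and then average the $P$ independent copies to gain the extra factor $P$ in the exponent. I would favor the latter: state the single-graph concentration from~\citet{borgs2008convergent}, note independence across $p$, and apply McDiarmid to the average of the $P$ i.i.d.\ bounded quantities $t(F,G_p)$, each of which has range contributing the $1/(4k^2)$ constant per graph. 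A secondary care point is that $t(F,G_p)$ in this paper is the \emph{induced/subgraph} count normalized to lie in $[0,1]$, so I would make explicit at the outset which normalization is in force and confirm the bias estimate $k(k-1)/(2n)$ applies to that normalization.
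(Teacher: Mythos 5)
Your outer architecture matches the paper's: decompose $|\bar t - t(F,W^*)|$ by triangle inequality into a deterministic bias term bounded by $k(k-1)/(2n)$ (the paper cites \citet[Lemma~4.3]{borgs2008convergent} for this) and a random fluctuation term, then apply a concentration inequality to the fluctuation. Where you diverge is in how the fluctuation is handled, and your preferred route has a real gap.

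You propose to ``state the single-graph concentration from \citet{borgs2008convergent}, note independence across $p$, and apply McDiarmid to the average of the $P$ i.i.d.\ bounded quantities $t(F,G_p)$.'' This does not produce the stated exponent. McDiarmid (equivalently, Hoeffding for i.i.d.\ sums) applied to $P$ variables each ranging in $[0,1]$ gives $2\exp(-2P\epsilon^2)$, with no $n$ dependence at all; the inequality only sees the bounded-difference constants (here $1/P$ per coordinate), not the tail behavior of each $t(F,G_p)$. Your parenthetical that each variable ``has range contributing the $1/(4k^2)$ constant per graph'' is exactly the information McDiarmid cannot use. To transfer the $n/(4k^2)$ factor from the single-graph bound into the averaged bound, you need to convert the single-graph tail bound into a bound on the moment generating function — i.e., interpret the Borgs tail estimate $\mathbb P[|t(F,G_n(W^*))-t(F,W^*)|>\delta]\le 2\exp(-n\delta^2/(4k^2))$ as saying each $t(F,G_p)$ is $(2k^2/n)$-sub-Gaussian, and then use the fact that the average of $P$ independent $\sigma^2$-sub-Gaussian variables is $(\sigma^2/P)$-sub-Gaussian. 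That MGF tensorization is the step that multiplies the exponent by $P$ while keeping the $n/(4k^2)$; this is precisely what the paper does, and it is not a cosmetic rephrasing of McDiarmid.

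Your alternative route (McDiarmid / Azuma over all $Pn$ latent coordinates plus edge coins simultaneously) is structurally sound and would yield a bound of the same shape, possibly with a slightly different constant in the exponent, but you flag the bookkeeping issues yourself — the two-layer randomness, the per-edge sensitivity of order $1/n^2$, getting the constant to land on $1/(4k^2)$ rather than $1/(2k^2)$ — and none of that is carried out. If you want to finish the lemma cleanly, the sub-Gaussian conversion is the shorter path: it lets you treat the single-graph concentration as a black box and get the $P$-fold averaging for free, which is exactly why the paper takes it.
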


\begin{proof}
Let $X_p = t(F, G_p)$ for $p=1, \ldots, P$. The graphs $G_p$ are independent samples from $G_n (W^*)$, so the random variables $X_p$ are independent and identically distributed.

We leverage concentration properties of $t(F, G_n (W^*))$ in~\citet[Lemma 4.4]{borgs2008convergent}, stating that $t(F, G_n (W^*))$ is concentrated around $t(F,W^*)$ with probability
\begin{equation}
    \mathbb{P}[|t(F,G_n (W^*)) - t(F,W^*)| > \delta] \leq 2 \exp(-n\delta^2/(4k^2)).
    \label{eq:borgs_concentration}
\end{equation}
This implies that the variable $Z = t(F,G_n (W^*)) - t(F,W^*)$ behaves like a sub-Gaussian random variable~\citep{van2014probability}\footnote{A random variable $Y$ is $\sigma^2$-sub-Gaussian if $\mathbb{E}[e^{\lambda Y}] \le e^{\lambda^2 \sigma^2/2}$ for all $\lambda \in \mathbb{R}$, which implies the tail bound $\mathbb{P}[|Y| \ge \delta] \le 2 e^{-\delta^2/(2\sigma^2)}$.}. Comparing the exponent $-\frac{n\delta^2}{4k^2}$ from~\eqref{eq:borgs_concentration} with the sub-Gaussian tail exponent $-\frac{\delta^2}{2\sigma^2}$, we see that $t(F,G_n (W^*)) - t(F,W^*)$ is sub-Gaussian with parameter $\sigma_Z^2 = \frac{2k^2}{n}$.

The variables we are averaging are $X_p = t(F,G_p)$ with $G_p \sim G_n (W^*)$. Let $\mu_n = \mathbb{E}[X_p] = \mathbb{E}[t(F, G_n (W^*))]$. The centered variables fulfill $X_p - \mu_n = (t(F,G_p) - t(F,W^*)) - (\mathbb{E}[t(F,G_p)] - t(F,W^*))$. Subtracting a constant (the bias $\mathbb{E}[t(F,G_p)] - t(F,W^*)$) from a sub-Gaussian variable preserves its sub-Gaussian property with the same parameter. Thus, $X_p - \mu_n$ are independent, zero-mean, and $\sigma^2$-sub-Gaussian with $\sigma^2 = \sigma_Z^2 = \frac{2k^2}{n}$.

The average of $P$ independent $\sigma^2$-sub-Gaussian random variables is $(\sigma^2/P)$-sub-Gaussian~\citep{van2014probability}. Let $\bar{Y} = \frac{1}{P}\sum_{p=1}^P (X_p - \mu_n) = \bar{t} (F,W^*) - \mu_n$. Then $\bar{Y}$ is $\left(\frac{2k^2}{nP}\right)$-sub-Gaussian.
The tail bound for $\bar{Y}$ is
\begin{equation}
\mathbb{P}[|\bar{Y}| \geq \delta] \leq 2 \exp\left(-\frac{\delta^2}{2 \cdot \frac{2k^2}{nP}}\right) = 2 \exp\left(-\frac{\delta^2 nP}{4k^2}\right).
\label{eq:subgaussian_average_bound_appendix}
\end{equation}
Substituting $\bar{Y} = \bar{t} (F, W^*) - \mu_n$, we get the concentration bound for the empirical mean around the expected mean:
\begin{equation}
\mathbb{P}[|\bar{t} (F,W^*) - \mu_n| \geq \delta] \leq 2 \exp\left(-\frac{\delta^2 nP}{4k^2}\right).
\label{eq:concentration_around_mean_appendix}
\end{equation}
We are interested in the deviation of $\bar{t} (F,W^*)$ from the true motif density $t(F,W^*)$. We use the triangle inequality to relate this deviation to the deviation from the mean $\mu_n$
\begin{equation}
|\bar{t} (F,W^*) - t(F,W^*)| \leq |\bar{t} (F,W^*) - \mu_n| + |\mu_n - t(F,W^*)|.
\end{equation}
Let $B_n = |\mu_n - t(F,W^*)|$ be the bias of the empirical estimate. It is known from the theory of graph limits (e.g., related to~\citet[Lemma 4.3]{borgs2008convergent}) that this bias is bounded by $B_n \leq \frac{k(k-1)}{2n}$.
If the deviation from the true density is at least $\epsilon_s$, i.e., $|\bar{t} (F,W^*) - t(F,W)| \geq \epsilon_s$, then it must be that $|\bar{t} (F,W^*) - \mu_n| \geq \epsilon_s - B_n$. This implication requires $\epsilon_s > B_n$ for the bound to be meaningful.
Thus, for $\epsilon_s > B_n$
\begin{equation}
\mathbb{P}[|\bar{t} (F,W^*) - t(F,W^*)| \geq \epsilon_s] \leq \mathbb{P}[|\bar{t} (F,W^*) - \mu_n| \geq \epsilon_s - B_n].
\end{equation}
Using the inequality~\eqref{eq:concentration_around_mean_appendix} with $\delta = \epsilon_s - B_n$
\begin{equation}
\mathbb{P}[|\bar{t} (F,W^*) - t(F,W^*)| \geq \epsilon_s] \leq 2 \exp\left(-\frac{(\epsilon_s - B_n)^2 nP}{4k^2}\right).
\end{equation}
Introducing the upper bound for the bias, $B_n \leq \frac{k(k-1)}{2n}$
\begin{align}
\mathbb{P}[|\bar{t} (F,W^*) - t(F,W^*)| \geq \epsilon_s] &\leq 2 \exp\left(-\frac{\left(\epsilon_s - \frac{k(k-1)}{2n}\right)^2 nP}{4k^2}\right) \\
&= 2 \exp\left(-\frac{Pn}{4k^2} \left(\epsilon_s - \frac{k(k-1)}{2n}\right)^2 \right).
\end{align}
This bound is valid when $\epsilon_s > \frac{k(k-1)}{2n}$, as required by the lemma statement.
\end{proof}

\begin{lemma}[Motif Proximity Implies Cut Distance Proximity (\citet{borgs2008convergent}, Theorem 3.7 (b))]
\label{lem:motif_to_cut_distance}
For any integer $k \ge 1$, if the motif distance between two graphons $W_1$ and $W_2$ fulfills $|t(F,W_1) - t(F,W_2)| < \delta_M = 3^{-k^2}$ for every simple graph $F \in \mathcal{F}_k$, then the cut distance between $W_1$ and $W_2$ is upper bounded by
\begin{equation}
    d_{cut} (W_1, W_2) \leq \eta = \frac{22C}{\sqrt{\log_2 k}},
\end{equation}
where $C = \max \{ 1, \| W_1 \|_{\infty}, \| W_2 \|_{\infty} \}$.
\end{lemma}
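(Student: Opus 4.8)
The plan is to route the argument through the \emph{first sampling lemma} for graphons, which is where the regularity-lemma content lives, and to reduce the hypothesis on motif densities to a statement about sampling distributions. Write $\mathbb{G}(k,W)$ for the random simple graph on vertex set $[k]$ obtained by drawing $\eta_1,\dots,\eta_k\sim\ccalU[0,1]$ i.i.d.\ and joining $i\sim j$ independently with probability $W(\eta_i,\eta_j)$, and let $W_{\mathbb{G}(k,W)}$ be the associated $\{0,1\}$-valued step-function graphon. The sampling lemma states that $\mathbb{E}\bigl[d_{cut}(W,W_{\mathbb{G}(k,W)})\bigr]\le c\,C/\sqrt{\log_2 k}$ for a universal constant $c$, with $C=\max\{1,\|W\|_\infty\}$; its proof approximates $W$ in cut norm by an $m$-block step function with $m\approx 4^{1/\varepsilon^2}$ blocks and error $\varepsilon$ (Frieze--Kannan weak regularity), notes that on such a step function the empirical block densities concentrate by a bounded-differences inequality once $k$ is large relative to $m$, and optimizes over $\varepsilon$, producing the $1/\sqrt{\log k}$ rate. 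I take this lemma as the engine and build the rest around it.

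First I would translate the density hypothesis into a total-variation bound between sampling distributions. For a labeled graph $H$ on $[k]$,
\begin{equation*}
\mathbb{P}[\mathbb{G}(k,W)=H]=\int_{[0,1]^k}\;\prod_{ij\in\ccalE_H}W(\eta_i,\eta_j)\prod_{ij\notin\ccalE_H}\bigl(1-W(\eta_i,\eta_j)\bigr)\,d\bbeta,
\end{equation*}
and expanding the $(1-W)$ factors by inclusion--exclusion writes this as a $\pm1$ combination of at most $2^{\binom{k}{2}}$ homomorphism densities $t(F,W)$ with $F\supseteq H$ on the same vertex set (for graphons, injective and ordinary homomorphism densities coincide, so no lower-order terms arise). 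Since $t(F,W)$ depends only on the isomorphism class of $F$, which belongs to $\ccalF_k$, the hypothesis $|t(F,W_1)-t(F,W_2)|<\delta_M$ gives $|\mathbb{P}[\mathbb{G}(k,W_1)=H]-\mathbb{P}[\mathbb{G}(k,W_2)=H]|<2^{\binom{k}{2}}\delta_M$ for every such $H$, and summing over the $2^{\binom{k}{2}}$ labeled graphs on $[k]$ yields $d_{\mathrm{TV}}\bigl(\mathbb{G}(k,W_1),\mathbb{G}(k,W_2)\bigr)<\tfrac12\,2^{k(k-1)}\delta_M=:\tau$. With $\delta_M=3^{-k^2}$ this is $\tau\le\tfrac12\,2^{-k}(2/3)^{k^2}$, i.e.\ super-exponentially small in $k$.

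Next I would close the loop by a coupling plus a triangle inequality. Couple $\mathbb{G}(k,W_1)$ and $\mathbb{G}(k,W_2)$ so that they coincide except on an event of probability at most $\tau$, and use, pointwise in the coupled pair, the triangle inequality
\begin{equation*}
d_{cut}(W_1,W_2)\le \mathbb{E}\bigl[d_{cut}(W_1,W_{\mathbb{G}(k,W_1)})\bigr]+\mathbb{E}\bigl[d_{cut}(W_{\mathbb{G}(k,W_1)},W_{\mathbb{G}(k,W_2)})\bigr]+\mathbb{E}\bigl[d_{cut}(W_{\mathbb{G}(k,W_2)},W_2)\bigr].
\end{equation*}
The first and third terms are each at most $c\,C/\sqrt{\log_2 k}$ by the sampling lemma; the middle term is $0$ on the agreement event (two labelings of the same graph have cut distance $0$) and at most $C$ otherwise, hence at most $C\tau$. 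Therefore $d_{cut}(W_1,W_2)\le 2cC/\sqrt{\log_2 k}+C\tau$, and since $\tau$ decays super-exponentially while $1/\sqrt{\log_2 k}$ decays only logarithmically, the $C\tau$ term is absorbed; tracking the explicit constant from the weak-regularity version of the sampling lemma yields the stated bound $d_{cut}(W_1,W_2)\le 22C/\sqrt{\log_2 k}$, with $C=\max\{1,\|W_1\|_\infty,\|W_2\|_\infty\}$ appearing because the cut norm scales linearly in the amplitude of the kernel. The main obstacle is the sampling lemma of the first step --- the weak regularity lemma together with the concentration estimate --- while the only delicate point in the remainder is that the threshold $\delta_M=3^{-k^2}$ must be chosen small enough that $2^{2\binom{k}{2}}\delta_M\to0$, so that the coupling error $\tau$ never competes with the $1/\sqrt{\log_2 k}$ rate.
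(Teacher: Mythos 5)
The paper does not actually prove this lemma: it imports it verbatim as Theorem~3.7(b) of Borgs et al.\ (2008) ``for completeness,'' so there is no in-paper argument to compare against. Your proposal is a correct reconstruction of the standard proof of that inverse counting lemma, and it is essentially the argument used in the cited source: inclusion--exclusion turns the induced-subgraph probabilities $\mathbb{P}[\mathbb{G}(k,W)=H]$ into $\pm1$ combinations of at most $2^{\binom{k}{2}}$ homomorphism densities of $k$-vertex graphs, so the hypothesis $|t(F,W_1)-t(F,W_2)|<3^{-k^2}$ yields a total-variation bound $\tau\le\tfrac12\,2^{k(k-1)}3^{-k^2}$ between the two sampling distributions, which is super-exponentially small; a coupling, the triangle inequality for the cut distance, and the First Sampling Lemma then give $d_{cut}(W_1,W_2)\le 2cC/\sqrt{\log_2 k}+C\tau$ with the $C\tau$ term absorbed. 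The only components taken on faith are the First Sampling Lemma itself (weak regularity plus concentration) and the exact constant $22$, both of which you flag explicitly; that level of reliance on the cited machinery is appropriate for a result the paper itself only quotes.
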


\begin{assumption}[Neural Network Approximation Capability]
Given a sufficiently expressive neural network architecture, it can be trained to find parameters $\theta$ such that for any set of empirical motif densities $\{\bar{t} (F, W^*)\}_{F \in \mathcal{F}_k}$ computed from data, and any desired approximation error $\epsilon_a > 0$, the neural network's MC motif estimates $\hat{t}_\theta(F, \hat{W}_\theta)$ satisfy
\begin{equation}
|\hat{t}_\theta (F, \hat{W}_\theta) - \bar{t} (F, W^*)| < \epsilon_a,
\end{equation}
for all $F \in \mathcal{F}_k$.
\label{as:nn_perf}
\end{assumption}

This assumption is fundamentally supported by the Universal Approximation Theorem (UAT)~\citep{cybenko1989approximation, hornik1989multilayer, leshno1993multilayer}. The UAT posits that a neural network with sufficient capacity (e.g., an adequate number of neurons in one or more hidden layers and appropriate non-linear activation functions) can approximate any continuous function to an arbitrary degree of accuracy on a compact domain. In our context, the INR $f_\theta$ models the graphon $W: [0,1]^2 \to [0,1]$. The motif density $t(F, W)$ (as defined in Equation~\ref{eq:motif_density}) is a continuous functional of $W$, meaning small changes in $W$ lead to small changes in $t(F,W)$. Consequently, if the INR $f_\theta$ can approximate any continuous graphon function, it can learn a specific $f_\theta$ such that the motif densities of the graphon estimated by the INR $t(F, f_\theta)$ are arbitrarily close to some target values. Given that our estimated motif densities $\hat{t}_\theta(F, \hat{W}_\theta)$ are Monte Carlo approximations of $t(F, f_\theta)$, they too can approach these target values (the empirical densities $\bar{t} (F, W^*)$) as the approximation of the underlying function by $f_\theta$ improves as the number of Monte Carlo samples $L$ increases. The assumption thus relies on the INR's capacity to learn a suitable graphon function $f_\theta$ and the optimization process's ability to find the parameters $\theta$ that make the resulting motif estimates $\hat{t}_\theta(F, \hat{W}_\theta)$ match the empirical observations $\bar{t} (F, W^*)$.

\subsection{Proof of Theorem~\ref{thm:main_cut_distance_bound}}

\begin{proof}[Proof of Theorem~\ref{thm:main_cut_distance_bound}]
Our goal is to bound the cut distance $d_{\text{cut}}(\hat{W}_\theta, W^*)$ by $\eta$, which is achieved if we can show that $|\hat{t}(F,\hat{W}_\theta) - t(F,W^*)| < \delta_M$ for all simple graphs $F$ with $|\ccalV_F|=k$ and where the values of both $\eta$ and $\delta_M$ are provided in Lemma~\ref{lem:motif_to_cut_distance}.

Consider any graph $F \in \mathcal{F}_k$. Using the triangle inequality, we can bound the difference between the neural network's motif estimate and the true graphon motif
\begin{equation}
|\hat{t}_\theta(F,\hat{W}_\theta) - t(F, W^*)| \leq |\hat{t}_\theta(F,\hat{W}_\theta) - \bar{t} (F, W^*)| + |\bar{t} (F, W^*) - t(F,W^*)|.
\label{eq:triang_motif_distance}
\end{equation}

By Assumption~\ref{as:nn_perf} on the neural network's training performance, we guarantee
\begin{equation}
|\hat{t}_\theta(F, \hat{W}_\theta) - \bar{t} (F,W^*)| < \epsilon_a = \frac{\delta_M}{2},
\end{equation}
for every $F \in \mathcal{F}_k$.

Now we need to bound the second term in the right-hand side of~\eqref{eq:triang_motif_distance}, the deviation of the empirical motif density from the true motif density $|\bar{t} (F,W^*) - t(F,W^*)|$. We use Lemma~\ref{lem:empirical_concentration_main} with the sampling error tolerance set to $\epsilon_s = \frac{\delta_M}{2}$. For this lemma to apply, we require $\epsilon_s > \frac{k(k-1)}{2n}$, which is equivalent to $\frac{\delta_M}{2} > \frac{k(k-1)}{2n}$, or $n > \frac{k(k-1)}{\delta_M}$. This condition is enforced in the theorem statement.

For a \textit{specific} graph $F \in \mathcal{F}_k$, the probability that the sampling error is large is bounded by Lemma~\ref{lem:empirical_concentration_main}
\begin{equation}
\mathbb{P}\left[|\bar{t} (F,W^*) - t(F,W^*)| \geq \frac{\delta_M}{2}\right] \leq 2 \exp\left(-\frac{Pn}{4k^2} \left(\frac{\delta_M}{2} - \frac{k(k-1)}{2n}\right)^2 \right).
\end{equation}
Let $\mathbb{P}_{\text{fail},F}$ denote this upper bound for a single graph $F \in \ccalF_k$.
However, we require the sampling error $|\bar{t} (F,W^*) - t(F,W^*)|$ to be less than $\frac{\delta_M}{2}$ for \textit{all} graphs $F \in \mathcal{F}_k$ simultaneously. By the union bound, the probability that there exists at least one graph $F \in \mathcal{F}_k$ for which the sampling error is $\frac{\delta_M}{2}$ or more is at most the sum of the probabilities for each individual graph
\begin{equation}
\mathbb{P}[\exists F \in \mathcal{F}_k \text{ s.t. } |\bar{t} (F,W^*) - t(F,W^*)| \geq \frac{\delta_M}{2}] \leq \sum_{F \in \mathcal{F}_k} \mathbb{P}_{\text{fail},F}.
\end{equation}
Since $|\ccalV_F|=k$ for all $F \in \mathcal{F}_k$, the bound $\mathbb{P}_{\text{fail},F}$ is identical for all these graphs. The sum is thus $N_k \cdot \mathbb{P}_{\text{fail},F}$, where we recall that $N_k = | \ccalF_k |$.
The condition \eqref{eq:sampling_cond} in the theorem is precisely set to ensure that this total probability of failure is less than the desired confidence level $\zeta$
\begin{equation}
N_k \cdot 2 \exp\left(-\frac{Pn}{4k^2}\left(\frac{\delta_M}{2} - \frac{k(k-1)}{2n}\right)^2\right) < \zeta.
\end{equation}
Therefore, with probability at least $1-\zeta$ (over the random graph samples $G_p$), the event that $|\bar{t} (F,W^*) - t(F,W^*)| < \frac{\delta_M}{2}$ holds for all $F \in \mathcal{F}_k$ occurs.

Conditioned on this high-probability event, and using the neural network approximation in Assumption \ref{as:nn_perf}, we have for every $F \in \ccalF_k$
\begin{equation}
|\hat{t}_{\theta} (F,\hat{W}_\theta) - t(F,W^*)| \leq |\hat{t}_\theta(F,\hat{W}_\theta) - \bar{t} (F,W^*)| + |\bar{t} (F,W^*) - t(F,W^*)| < \frac{\delta_M}{2} + \frac{\delta_M}{2} = \delta_M.
\end{equation}
Since $|\hat{t}_{\theta} (F,\hat{W}_\theta) - t(F,W^*)| < \delta_M$ holds for all $F \in \ccalF_k$, Lemma~\ref{lem:motif_to_cut_distance} implies that the cut distance between the estimated graphon $\hat{W}_\theta$ and the true graphon $W^*$ is less than $\eta$
\begin{equation}
d_{\text{cut}}(W_\theta, W) < \eta,
\end{equation}
with probability at least $1-\zeta$, concluding the proof.
\end{proof}



\section{Unbiasedness of Monte Carlo Estimator for an INR-Based Graphon Moment Estimator}\label{app:unbiased}

Let $F = (\mathcal{V}_F, \mathcal{E}_F)$ be a graph, where $\mathcal{V}_F$ is a set of $k = |\mathcal{V}_F|$ vertices and $\mathcal{E}_F$ is the set of edges. Let $f_{\theta}: [0,1]^2 \to [0,1]$ be an Implicit Neural Representation (INR) parameterized by $\theta$, which models the probability of an edge existing between two nodes based on their latent variables $\eta_i, \eta_j \in [0,1]$, and its estimated graphon is denoted by $\hat{W}_\theta$.

The likelihood of observing the graph structure $F$ given a specific set of latent variable assignments $\boldsymbol{\eta} = \{ \eta_v \}_{v \in \mathcal{V}_F}$ and the INR model $f_{\theta}$ is given by
\begin{equation}
    P_{\theta} (\boldsymbol{\eta}; F, \hat{W}_\theta) = \prod_{(i,j) \in \mathcal{E}_F} \hat{W}_\theta(\eta_i, \eta_j)\prod_{(i,j) \notin \mathcal{E}_F} (1-\hat{W}_\theta(\eta_i, \eta_j)).
\end{equation}
The quantity $t^{\prime}_\theta (F, \hat{W}_\theta )$ is defined as this likelihood integrated over all possible configurations of the latent variables in the $k$-dimensional unit hypercube
\begin{equation}
    t^{\prime}_\theta (F, \hat{W}_\theta ) = \int_{[0,1]^k} P_{\theta}(\boldsymbol{\eta}; F, \hat{W}_\theta ) d\boldsymbol{\eta},
    \label{eq:expr_tprime_likelihood}
\end{equation}
where $d\boldsymbol{\eta} = \prod_{v \in \mathcal{V}_F} d\eta_v$.

The $L$-sample Monte Carlo estimator for $t^{\prime}_\theta (F, \hat{W}_\theta )$ is given by
\begin{equation}
    \hat{t}^{\prime}_{\theta} (F, \hat{W}_{\theta}) = \frac{1}{L} \sum_{l=1}^{L} P(\boldsymbol{\eta}^{(l)}; F, \hat{W}_{\theta}).
    \label{eq:mc_estimator}
\end{equation}
For this estimation, each sample $\boldsymbol{\eta}^{(l)} = [\eta^{(l)}_{v_1}, \dots, \eta^{(l)}_{v_k}]$ is a vector where each component $\eta^{(l)}_{v}$ (for $v \in \mathcal{V}_F$) is drawn independently from the uniform distribution $\ccalU [0,1]$.

\subsection{Unbiasedness of the Estimator}

\begin{theorem}
The Monte Carlo estimator $\hat{t}^{\prime}_{\theta} (F, \hat{W}_{\theta})$ is an unbiased estimator of $t^{\prime}_\theta (F, \hat{W}_\theta)$.
\end{theorem}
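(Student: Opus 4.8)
The plan is to prove unbiasedness by a direct computation of the expectation of the Monte Carlo estimator, using linearity of expectation together with the fact that each sample $\boldsymbol{\eta}^{(l)}$ is drawn from the uniform distribution on $[0,1]^k$, so that taking an expectation against that distribution is precisely integration over the unit hypercube. No concentration or approximation machinery is needed; this is a textbook unbiasedness argument, and the only care required is bookkeeping over the $k$ independent coordinates of each sample.

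First I would write $\mathbb{E}[\hat{t}^{\prime}_{\theta}(F,\hat{W}_\theta)] = \frac{1}{L}\sum_{l=1}^{L} \mathbb{E}[P(\boldsymbol{\eta}^{(l)}; F, \hat{W}_{\theta})]$ by linearity of expectation. Next, since the $k$ components of $\boldsymbol{\eta}^{(l)}$ are i.i.d.\ $\mathcal{U}[0,1]$, their joint density on $[0,1]^k$ is identically $1$, so for each $l$,
\begin{equation}
\mathbb{E}[P(\boldsymbol{\eta}^{(l)}; F, \hat{W}_{\theta})] = \int_{[0,1]^k} P(\boldsymbol{\eta}; F, \hat{W}_{\theta})\, d\boldsymbol{\eta} = t^{\prime}_\theta(F,\hat{W}_\theta),
\end{equation}
where the last equality is just the definition~\eqref{eq:expr_tprime_likelihood}. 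Summing over $l$ and dividing by $L$ gives $\mathbb{E}[\hat{t}^{\prime}_{\theta}(F,\hat{W}_\theta)] = \frac{1}{L}\cdot L \cdot t^{\prime}_\theta(F,\hat{W}_\theta) = t^{\prime}_\theta(F,\hat{W}_\theta)$, which is the claim.

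The one technical point worth addressing explicitly is the interchange of expectation and the finite sum (trivial by linearity) and the legitimacy of identifying $\mathbb{E}[P(\boldsymbol{\eta}^{(l)};F,\hat{W}_\theta)]$ with the integral: this is justified because $P(\cdot; F, \hat{W}_\theta)$ is a product of finitely many factors each bounded in $[0,1]$ (as $\hat{W}_\theta = f_\theta$ takes values in $[0,1]$), hence $P$ is a bounded measurable function on $[0,1]^k$ and its Lebesgue integral against the uniform density is finite and well-defined. So there is really no obstacle here — if I had to name the ``hard part,'' it is merely making sure the notation for the per-sample likelihood $P(\boldsymbol{\eta}^{(l)};F,\hat{W}_\theta)$ in~\eqref{eq:mc_estimator} is read consistently with the integrand in~\eqref{eq:expr_tprime_likelihood}, i.e.\ that the product over $(i,j)\in\mathcal{E}_F$ and over $(i,j)\notin\mathcal{E}_F$ appearing in~\eqref{eq:mc_moment} matches $P_\theta(\boldsymbol{\eta};F,\hat{W}_\theta)$ verbatim, which it does by construction.
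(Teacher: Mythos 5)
Your proposal is correct and follows essentially the same route as the paper's proof: apply linearity of expectation to the finite sum, then use that each sample has density~$1$ on $[0,1]^k$ to identify $\mathbb{E}[P(\boldsymbol{\eta}^{(l)};F,\hat{W}_\theta)]$ with the integral defining $t^{\prime}_\theta(F,\hat{W}_\theta)$. The extra remark on boundedness and measurability of the integrand is harmless and sound, though the paper omits it as obvious.
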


\begin{proof}
To show that the Monte Carlo estimation $\hat{t}^{\prime}_{\theta} (F, \hat{W}_{\theta})$ is an unbiased estimator of $t^{\prime}_{\theta} (F, \hat{W}_{\theta})$, we need to prove that $\mathbb{E}[\hat{t}_{\theta}^{\prime} (F, \hat{W}_{\theta})] = t^{\prime}_\theta (F, \hat{W}_\theta )$.

The expectation of the estimator is:
\begin{align}
    \mathbb{E}[\hat{t}^{\prime}_{\theta} (F, \hat{W}_{\theta})] &= \mathbb{E}\left[\frac{1}{L} \sum_{l=1}^{L} P_{\theta}(\boldsymbol{\eta}^{(l)}; F, \hat{W}_{\theta})\right] \nonumber \\
    &= \frac{1}{L} \sum_{l=1}^{L} \mathbb{E}[P_{\theta}(\boldsymbol{\eta}^{(l)}; F, \hat{W}_{\theta})] \quad \text{(by linearity of the expectation)}. \label{eq:expectation_MC}
\end{align}
Since each sample $\boldsymbol{\eta}^{(l)}$ is drawn independently from the same uniform distribution, therefore its pdf is $p(\boldsymbol{\eta})=1$ on $[0,1]^k$, the expectation $\mathbb{E}[P_{\theta}(\boldsymbol{\eta}^{(l)}; F, \hat{W}_{\theta})]$ is the same for all $l$. Let this common expectation be $\mathbb{E}[P_{\theta}(\boldsymbol{\eta}; F, \hat{W}_{\theta})]$, whose value is
\begin{align*}
    \mathbb{E}[P_{\theta}(\boldsymbol{\eta}; F, \hat{W}_{\theta})] &= \int_{[0,1]^k} P_{\theta}(\boldsymbol{\eta}; F, \hat{W}_{\theta}) p(\boldsymbol{\eta}) d\boldsymbol{\eta} \\
    &= \int_{[0,1]^k} P_{\theta}(\boldsymbol{\eta}; F, \hat{W}_{\theta})) \cdot 1 \, d\boldsymbol{\eta} \quad (\text{since } p(\boldsymbol{\eta})=1 \text{ on } [0,1]^k) \\
    &= t^{\prime}_\theta (F, \hat{W}_\theta ),
\end{align*}
according to~\eqref{eq:expr_tprime_likelihood}. Substituting this back into~\eqref{eq:expectation_MC}
\begin{align*}
    \mathbb{E}[\hat{t}^{\prime}_{\theta} (F, \hat{W}_{\theta})] &= \frac{1}{L} \sum_{l=1}^{L} t^{\prime}_\theta (F, \hat{W}_\theta ) \\
    &= \frac{1}{L} (L \cdot t^{\prime}_\theta (F, \hat{W}_\theta )) \\
    &= t^{\prime}_\theta (F, \hat{W}_\theta ).
\end{align*}
Thus, $\mathbb{E}[\hat{t}^{\prime}_{\theta} (F, \hat{W}_{\theta})] = t^{\prime}_\theta (F, \hat{W}_\theta )$, which shows that the Monte Carlo estimator $\hat{t}^{\prime}_{\theta} (F, \hat{W}_{\theta})$ is an unbiased estimator of $t^{\prime}_\theta (F, \hat{W}_\theta )$. This means that, on average, the estimator will yield the true value of the integral defined by $f_{\theta}$ and the graph structure $F$. 
\end{proof}



\section{Time Complexity of \textsc{MomentNet}}\label{app:complexity}

\paragraph{Stage 1: parallel motif–density extraction.}
For each graph \(G_p=(\ccalV_p,\ccalE_p)\) let
\(n_p=\lvert \ccalV_p\rvert\), \(e_p=\lvert \ccalE_p\rvert\) and
\(d_p=\max_{v\in \ccalV_p}\deg(v)\) be the number of nodes, number of edges, and maximum degree of the graph $G_p$.
ORCA \citep{orca} counts all \(2\!-\!4\)-node graphlets in  
\[
T_{\text{ORCA}}(G_p)=O\!\bigl(e_p d_p + n_p d_p^{3}\bigr).
\]
Because every graph can be processed independently, we dispatch the
\(P\) graphs to \(M\) workers (\(M\!\le\!P\)).
Hence the \emph{wall-clock} preprocessing time is  
\[
T_{\text{stage\,1}}\;=\;
O\!\Bigl(
    \bigl\lceil\tfrac{P}{M}\bigr\rceil
    \max_{p}\!\bigl(e_p d_p + n_p d_p^{3}\bigr)
\Bigr).
\]
With one worker per graph (\(M=P\)) this shrinks to the single-graph
cost that dominates (\(\max_p\)).

\paragraph{Stage 2: training the Moment network.}
 Define:
\begin{itemize}
  \item \(L\): number of Monte-Carlo samples per epoch;
  \item \(N_e\): number of training epochs;
  \item \(C_{\text{INR}}\): cost of one forward/back-prop through the INR
        for a single edge probability;
  \item \(| \theta |\): total number of trainable parameters.
\end{itemize}
Each motif instance $F$ of size \(|\ccalV_F| \le 4\) invokes the INR at most
six times, a constant.  
One epoch therefore costs 
\[
T_{\text{epoch}}
    =O\!\bigl(L\,C_{\text{INR}} + | \theta |\bigr),\qquad
T_{\text{stage\,2}}
    =O\!\bigl(N_e \,(L\,C_{\text{INR}} + |\theta |)\bigr).
\]

\paragraph{Overall wall-clock complexity.}
\[
T_{\text{MomentNet}}
  =O\!\Bigl(
      \bigl\lceil\tfrac{P}{M}\bigr\rceil
      \max_{p}(e_p d_p + n_p d_p^{3})
      + N_e \,(L\,C_{\text{INR}} + |\theta |)
     \Bigr).
\]

\subsection*{Comparison with SIGL in Sparse vs.\ Dense Regimes}

SIGL~\cite{azizpour2025scalable} requires message-passing GNN training,
histogram building and INR fitting; with \(N_e\) epochs its wall-clock
cost is \(T_{\text{SIGL}} = O(PN_e n_{T}^{2})\),
where \(n_{T}=\max_p n_p\).

\begin{itemize}
  \item \textbf{Sparse regime}  
        (\(d_{\max}=O(1)\Rightarrow e_p=O(n_p)\)):
        \begin{itemize}
          \item \textsc{MomentNet}:  
                \(T = O\!\bigl(\lceil\frac{P}{M}\rceil n_{T}
                             + N_e\,(L\,C_{\text{INR}}+|\theta |)\bigr)\);
          \item SIGL: \(T = O(PN_e n_{T}^{2})\).
        \end{itemize}
        Here MomentNet grows \emph{linearly} in \(n_{T}\) (plus the
        network-training term), whereas SIGL is quadratic.  
        In practice we repeatedly observe MomentNet to be faster when
        graphs have \(e_p=O(n_p)\) even for very large \(n_p\).

  \item \textbf{Dense regime}  
        (Erdős–Rényi with \(p_{conn}\!=\!0.5\) implies
        \(d_{\max}\!\approx\!n_{T}/2\) and \(e_p=\Theta(n_{T}^{2})\)):
        \begin{itemize}
          \item \textsc{MomentNet}:  
                \(T = O\!\bigl(
                       \lceil\frac{P}{M}\rceil \, n_{T}^{4}
                       + N_e\,(L\,C_{\text{INR}}+|\theta |)
                     \bigr)\);
          \item SIGL: \(T = O(PN_e n_{T}^{2})\).
        \end{itemize}
        Asymptotically, SIGL’s \(n_{T}^{2}\) term is smaller than MomentNet’s \(n_{T}^{4}\). Yet empirical runs on dense ER graphs with \(p_{conn}\!=\!0.5\) still show MomentNet to be faster once (i)~Stage 1 is fully parallelised and (ii)~the constants behind GNN message passing and histogramming dominate SIGL’s quadratic term. Thus, the theoretical advantage of SIGL in dense graphs does not necessarily translate into shorter wall-clock times. Furthermore, MomentNet utilizes a two-stage process. The initial stage involves computing motif counts from the input graphs. Following this, the graphs are discarded. The second stage, which our experiments show to be the dominant phase of our method, then trains an Implicit Neural Representation (INR) using a vector of average moments derived from these counts. This design provides a significant reason for our method's improved speed, particularly in dense scenarios. By isolating the computationally expensive motif counting to a preliminary step, this cost is bypassed during the subsequent, dominant INR learning phase.
\end{itemize}

\noindent

With graph-level parallelism, \textsc{MomentNet} is
\emph{provably linear} in the number of edges for sparse networks and
remains competitive on dense networks because its constant factors are
smaller and its training cost is independent of the graph size.


\section{Proof of Proposition 1}\label{app:proof_prop1}
Let $W_1,W_2\!:\,[0,1]^2\to[0,1]$ be two graphons and fix 
$\alpha\in(0,1)$.  
Denote their convex combination by
\[
   W_\alpha \;=\;\alpha\,W_1 + (1-\alpha)\,W_2.
\]

\paragraph{Edge density (a linear functional).}
For the single–edge motif $F_e$ on vertices $\ccalV_{F_e} = \{1,2\}$ and $\ccalE_{F_e} = \{(1,2)\}$, the induced
density is  
\[
   t^{\prime}(F_e,W)\;=\;
      \int_{[0,1]^2} W(\eta_1,\eta_2)\,d\eta_1\,d\eta_2 
      \;=\; \mathbb{E}[W(\eta_1,\eta_2)].
\]
Because the integrand is \emph{linear} in $W$, we immediately have
\[
   t^{\prime}(F_e,W_\alpha)
      \;=\;\alpha\,t^{\prime}(F_e,W_1)
         + (1-\alpha)\,t^{\prime}(F_e,W_2),
\]
so the edge density behaves affinely under convex combinations.

\paragraph{The $V$–shape motif.}
Let $F$ be the \emph{$V$–shape} (three-vertex path) on vertex set
$\mathcal{V}_F=\{1,2,3\}$ and edge set  
$\mathcal{E}_F=\{(1,2),(1,3)\}$.  
Its induced density is
\begin{equation}
   t^{\prime}(F,W)
      \;=\;
      \int_{[0,1]^3}
         W(\eta_1,\eta_2)\,
         W(\eta_1,\eta_3)\,
         \bigl[1-W(\eta_2,\eta_3)\bigr]\,
      d\eta_1d\eta_2d\eta_3.
\label{eq:vshape}
\end{equation}

Plugging $W_\alpha$ into \eqref{eq:vshape}
\begin{align}
   t^{\prime}(F,W_\alpha)
   =&
   \mathbb{E}\!\Bigl[
      \bigl(\alpha W_1 + (1-\alpha)W_2\bigr)_{12}\,
      \bigl(\alpha W_1 + (1-\alpha)W_2\bigr)_{13}\,
      \bigl(1- \alpha W_{1}-(1-\alpha)W_{2}\bigr)_{23}
   \Bigr] \nonumber \\
   =&
      \alpha^3\,\mathbb{E}\!\bigl[(W_{1})_{12} (W_{1})_{13} (1-(W_{1})_{23}) \bigr]
      \;+\;(1-\alpha)^3\,\mathbb{E}\!\bigl[(W_{2})_{12} (W_{2})_{13} (1-(W_{2})_{23})\bigr] \nonumber \\
   &+\textbf{mixed\,terms},
\label{eq:vshape_expand}
\end{align}
where, to simplify notation, we used $(\cdot)_{ij}$ to denote that the graphon inside the parenthesis is evaluated on $(\eta_i, \eta_j)$ and ``mixed terms'' contain products in which at least one factor comes from $W_1$ and another from $W_2$. Because these mixed terms generally do not cancel, the right-hand side of~\eqref{eq:vshape_expand} \emph{does not} reduce to the affine combination
\begin{equation}
   \alpha\,t^{\prime}(F,W_1) + (1-\alpha)\,t^{\prime}(F,W_2),
\end{equation}
except in degenerate cases (e.g.\ $W_1=W_2$ or $\alpha\in\{0,1\}$).

\paragraph{Concrete counter-example.}
Take constant graphons $W_1 (\eta_i, \eta_j) = p_1$ and $W_2 (\eta_i, \eta_j) = p_2$ with
$p_1$ and $p_2$ being constants satisfying $0<p_1\neq p_2<1$.  
Then $W_\alpha (\eta_i, \eta_j) =  p_\alpha = \alpha p_1+(1-\alpha)p_2$, and
\[
   t^{\prime}(F,W_i)=p_i^{\,2}(1-p_i),\qquad
   t^{\prime}(F,W_\alpha)=p_\alpha^{\,2}(1-p_\alpha),
\]
for $i \in \{1, 2\}$.
However,
\[
   p_\alpha^{\,2}(1-p_\alpha)
   \;\neq\;
   \alpha\,p_1^{\,2}(1-p_1) + (1-\alpha)\,p_2^{\,2}(1-p_2)
\]
whenever $p_1\neq p_2$ and $\alpha\in(0,1)$, confirming that the
$V$–shape moment is \emph{not} affine in $W$.

\paragraph{Conclusion.}
Edge moments are linear in the graphon, but higher-order induced
moments involve \emph{non-linear} (polynomial) combinations of $W$.
Consequently, a convex combination of graphons preserves edge moments
but fails to preserve the remaining components of the motif-moment
vector.\qed

\section{Methods Details}\label{app:method_details}

\subsection{Latent Variable Invariance of MomentNet}
The graphon model and our proposed model to learn it exhibit invariance to the specific ordering or labeling of latent variables. This means that the estimated graphon is unchanged under measure preserving transformations~\citep{borgs2008convergent}.
In other words, if the underlying structure of a graphon is rearranged or relabeled, MomentNet can still accurately capture the essential underlying connectivity patterns.
To illustrate this crucial property, we conduct an experiment using an SBM graphon, more precisely the one indexed by 12 in Table~\ref{table:groundtruth}.
For this experiment, we utilize the same dataset that was generated for the performance comparison of MomentNet discussed in Section~\ref{sec:exps}.
The learned graphons for three different realizations of this experiment are presented in Figure~\ref{fig:sbm_graphons}.
It is evident that all three estimated graphons closely resemble the ground truth graphon, which is depicted in Figure~\ref{fig:graphon_diagram}. Also, the three estimated graphons reflect the same underlying structure, and all of them share a similar GW loss, which is a loss function invariant to measure preserving transformations.
This essentially means that, no matter which of the three depicted graphons we sample graphs from, the underlying structure of all these graphs will be the same.
This outcome strongly verifies that MomentNet's primary mechanism involves matching the moments of the graph, without caring about the ordering of the latent variables.
Consequently, and in contrast to other methodologies, its estimated graphon accurately reflects the ground truth structure, allowing for differences only up to a permutation of the latent variable locations.

\begin{figure}[ht!] 
    \centering 

    \begin{subfigure}[b]{0.48\textwidth} 
        \centering
        \includegraphics[width=\linewidth]{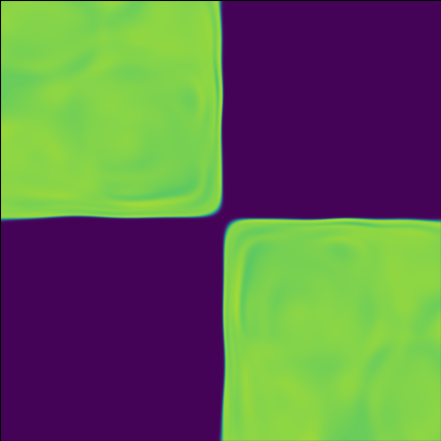}
        \caption{Estimated SBM graphon (Sample 1).} 
        \label{fig:sbm_graphon1}
    \end{subfigure}
    \hfill 
    \begin{subfigure}[b]{0.48\textwidth}
        \centering
        \includegraphics[width=\linewidth]{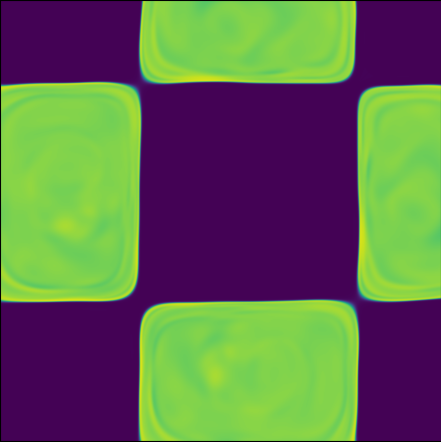}
        \caption{Estimated SBM graphon (Sample 2).} 
        \label{fig:sbm_graphon2}
    \end{subfigure}

    \vspace{0.5cm} 

    \begin{subfigure}[b]{\textwidth} 
        \centering
        \includegraphics[width=0.5\linewidth]{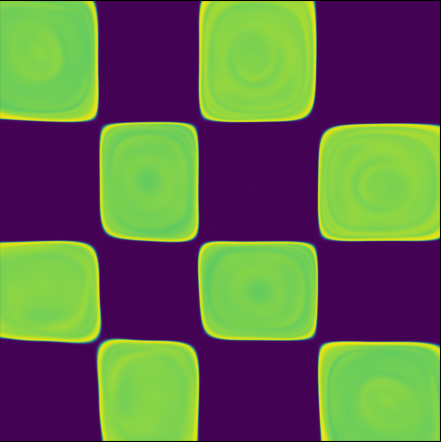} 
        \caption{Estimated SBM graphon (Sample 3).} 
        \label{fig:sbm_graphon3}
    \end{subfigure}

    \caption{Three samples of estimated graphons derived from a SBM.} 
    \label{fig:sbm_graphons}
\end{figure}

\subsection{MomentMixup Pseudocode}

\begin{algorithm}[H] 
\caption{MomentMixup Augmentation}
\label{alg:moment_mixup}
\begin{algorithmic}[1] 
    \Require
        $\alpha_{\text{mix}}$: float, mixing coefficient ($0 \le \alpha_{\text{mix}} \le 1$).
        
        $\mathcal{G}_i, \mathcal{G}_j$: list of graphs, graph datasets for classes $i$ and $j$.
        
        $y_i, y_j$: integer, label for classes $i$ and $j$.
        
        $N_{\text{sample}}$: integer, number of graphs to sample from each class dataset to compute average moments.
        
        $N_{\text{nodes}}$: integer, number of nodes for each new graph.
        
        $N_{\text{graphs}}$: integer, number of augmented graphs to generate.
    \Ensure
        $\mathcal{G}_{\text{aug}}$: list of graphs and labels, newly generated augmented graphs.

    \State \Comment{Compute average moment vector for class $i$}
    \State $\mathcal{S}_i \gets \text{Randomly select } N_{\text{sample}} \text{ graphs from } \mathcal{G}_i$
    \State $\mathbf{m}_i \gets \frac{1}{N_{\text{sample}}} \sum_{G \in \mathcal{S}_i} \text{ComputeGraphMoments}(G)$
    
    \State \Comment{Compute average moment vector for class $j$}
    \State $\mathcal{S}_j \gets \text{Randomly select } N_{\text{sample}} \text{ graphs from } \mathcal{G}_j$
    \State $\mathbf{m}_j \gets \frac{1}{N_{\text{sample}}} \sum_{G \in \mathcal{S}_j} \text{ComputeGraphMoments}(G)$

    \State $\mathbf{m}_{\text{target}} \gets \alpha_{\text{mix}} \cdot \mathbf{m}_i + (1 - \alpha_{\text{mix}}) \cdot \mathbf{m}_j$ \Comment{Compute target mixed moments}

    \State $y_{\text{target}} \gets \alpha_{\text{mix}} \cdot y_i + (1 - \alpha_{\text{mix}}) \cdot y_j$
    \Comment{Compute the label for the new samples}

    \State $W_{\text{aug}} \gets \text{MomentNet}(\mathbf{m}_{\text{target}})$
    \Comment{Trains MomentNet for $\mathbf{m}_{\text{target}}$}

    \State $\mathcal{G}_{\text{aug}} \gets \text{[]}$ \Comment{Initialize list for augmented samples}
    \For{$k \gets 1 \text{ to } N_{\text{graphs}}$}
        \State $G_{\text{new}} \gets \text{SampleGraph}(W_{\text{aug}}, N_{\text{nodes}})$
        \Comment{Sample new graph}
        \State Add ($G_{\text{new}}$, $y_{\text{target}}$) to $\mathcal{G}_{\text{aug}}$
    \EndFor
    \State \Return $\mathcal{G}_{\text{aug}}$
\end{algorithmic}
\end{algorithm}


\section{List of Graphons}\label{app:graphon_table}
\begin{table*}[!htb]
\caption{Table of Graphons}
    \centering
    \begin{tabular}{lll}
        \toprule
         & $W(x,y)$ \\ 
         \midrule
1 &  $xy$  \\
2 &  $e^{(-(x^{0.7}+y^{0.7}))}$  \\
3 &  $\frac{1}{4}(x^2+y^2+\sqrt{x}+\sqrt{y})$ \\
4 &  $\frac{1}{2}(x+y)$ \\ 
5 & $(1+e^{(-2(x^2+y^2))})^{-1}$\\ 
6 & $(1+e^{(-\max\{x,y\}^2-\min\{x,y\}^4)})^{-1}$\\ 
7 & $e^{(-\max\{x,y\}^{0.75})}$\\
8 & $e^{(-\frac{1}{2}(\min\{x,y\}+\sqrt{x}+\sqrt{y}))}$\\
9 &  $\log(1+\max\{x,y\})$\\
10 &  $|x-y|$\\
11 & $1-|x-y|$\\
12 & $0.8\mathbf{I}_2\otimes\mathds{1}_{[0,\frac{1}{2}]^2}$\\
13 & $0.8(1-\mathbf{I}_2)\otimes\mathds{1}_{[0,\frac{1}{2}]^2}$\\
        \bottomrule
    \end{tabular}
    \label{table:groundtruth}
\end{table*}

The graphons are also visualized in Figure~\ref{fig:graphon_diagram}.

\begin{figure}[H]
    \centering
    \includegraphics[width=0.8\linewidth]{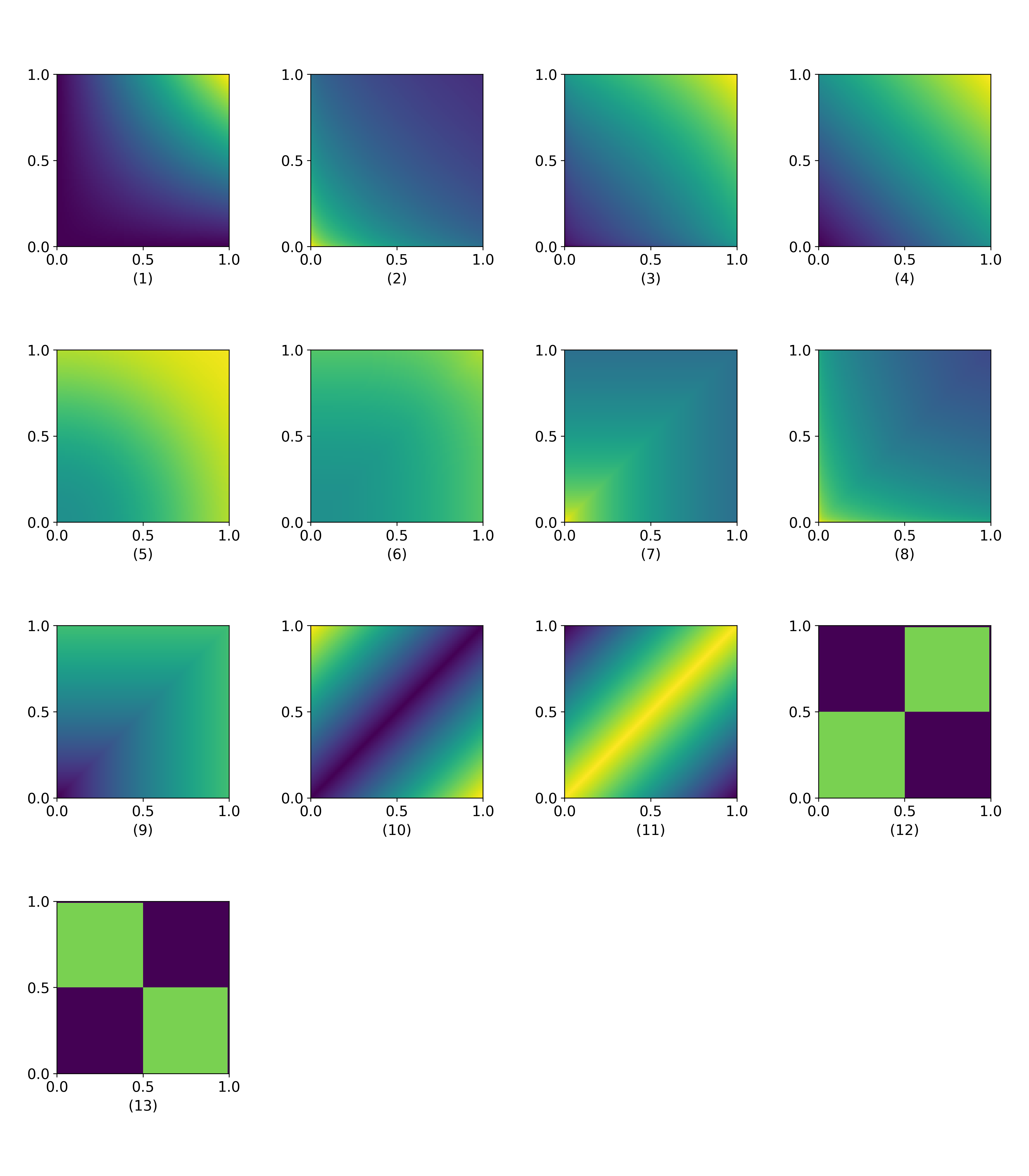}
    \caption{Representation of the graphons defined in Table~\ref{table:groundtruth}.}
    \label{fig:graphon_diagram}
\end{figure}



\section{Selected Motifs}\label{app:motif_plot}

\begin{figure}[H]
\begin{tikzpicture}[
    every node/.style={circle, draw=black, fill=black, minimum size=2mm, inner sep=1pt},
    scale=0.7
]

\def\dx{2.2}

\node (g0a) at (0,0) {};
\node (g0b) at (0,-1) {};
\draw (g0a) -- (g0b);

\node (g1a) at (\dx,0) {};
\node (g1b) at (\dx,-1) {};
\node (g1c) at (\dx,-2) {};
\draw (g1a) -- (g1b) -- (g1c);

\node (g2a) at (2*\dx,0) {};
\node (g2b) at (2*\dx+0.6,-1) {};
\node (g2c) at (2*\dx-0.6,-1) {};
\draw (g2a) -- (g2b) -- (g2c) -- (g2a);

\node (g3a) at (3*\dx,0) {};
\node (g3b) at (3*\dx,-1) {};
\node (g3c) at (3*\dx,-2) {};
\node (g3d) at (3*\dx,-3) {};
\draw (g3a) -- (g3b) -- (g3c) -- (g3d);

\node (g4c) at (4*\dx,-1) {}; 
\node (g4a) at (4*\dx-0.7,0) {};
\node (g4b) at (4*\dx+0.7,0) {};
\node (g4d) at (4*\dx,-2) {};
\draw (g4c) -- (g4a) -- (g4c) -- (g4b);
\draw (g4c) -- (g4d);

\node (g5a) at (5*\dx-0.5,0) {};
\node (g5b) at (5*\dx+0.5,0) {};
\node (g5c) at (5*\dx+0.5,-1) {};
\node (g5d) at (5*\dx-0.5,-1) {};
\draw (g5a) -- (g5b) -- (g5c) -- (g5d) -- (g5a);

\node (g6a) at (6*\dx,0) {};
\node (g6b) at (6*\dx-0.6,-1) {};
\node (g6c) at (6*\dx+0.6,-1) {};
\node (g6d) at (6*\dx,1) {};
\draw (g6a) -- (g6b) -- (g6c) -- (g6a);
\draw (g6a) -- (g6d);

\node (g7a) at (7*\dx-0.5,0) {};
\node (g7b) at (7*\dx+0.5,0) {};
\node (g7c) at (7*\dx+0.5,-1) {};
\node (g7d) at (7*\dx-0.5,-1) {};
\draw (g7a) -- (g7b) -- (g7c) -- (g7d) -- (g7a);
\draw (g7a) -- (g7c);

\node (g8a) at (8*\dx-0.5,0) {};
\node (g8b) at (8*\dx+0.5,0) {};
\node (g8c) at (8*\dx+0.5,-1) {};
\node (g8d) at (8*\dx-0.5,-1) {};
\draw (g8a) -- (g8b) -- (g8c) -- (g8d) -- (g8a);
\draw (g8a) -- (g8c);
\draw (g8b) -- (g8d);
\foreach \i in {0,...,8} {
  \node[text=black, fill=white, font=\scriptsize] at (\i*\dx,-3.8) {\( F_{\i} \)};
}

\end{tikzpicture}
\caption{Motifs up to four nodes.}
\label{fig:motifs_diag}
\end{figure}
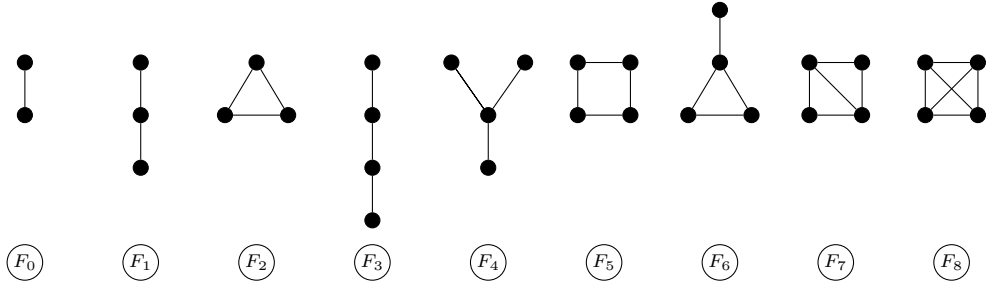


\section{Centrality Measures}\label{app:cent_measures}

In real-world graph statistical analysis, \textbf{centrality measures} are of significant interest to researchers. Building upon the work of Avella-Medina et al.~\cite{avella2020centrality}, who demonstrated the computability of these measures on graphons, we use several centrality metrics to further evaluate the quality of the estimated graphons. Specifically, we employ:

\begin{itemize}[leftmargin=*, wide, labelwidth=!, labelindent=0pt]
    \item \textbf{Degree Centrality}: This measure quantifies the number of direct connections a node possesses.
    \begin{itemize}
        \item \textit{High Value}: Indicates a node with many direct connections, often acting as a local hub with numerous immediate interactions. Such a node is highly active in its local neighborhood.
        \item \textit{Low Value}: Suggests a node with few direct connections, implying less immediate activity or influence within its local vicinity.
    \end{itemize}

    \item \textbf{Eigenvector Centrality}: This identifies influential nodes by considering that connections to other highly-connected (and thus influential) nodes contribute more significantly to a node's score. It measures how well-connected a node is to other well-connected nodes.
    \begin{itemize}
        \item \textit{High Value}: A node with high eigenvector centrality is connected to other nodes that are themselves influential. This node is likely a key player within an influential cluster or a leader among leaders.
        \item \textit{Low Value}: A node with low eigenvector centrality is typically connected to less influential nodes or has relatively few connections overall. Its influence is not strongly amplified by the influence of its neighbors.
    \end{itemize}

    \item \textbf{Katz Centrality}: This measure considers all paths in the graph, assigning exponentially more weight to shorter paths while still accounting for longer ones. It uses an attenuation factor $\alpha$, which determines the weight given to longer paths: smaller values of $\alpha$ emphasize shorter paths, while larger values give more importance to longer paths, up to a theoretical limit to ensure convergence.
    \begin{itemize}
        \item \textit{High Value}: Indicates a node that is reachable by many other nodes through numerous paths, with shorter paths contributing more. This node is generally well-connected throughout the network, both directly and indirectly, and can efficiently disseminate or receive information.
        \item \textit{Low Value}: Suggests a node that is not easily reachable by many other nodes or is primarily connected via very long paths. Its overall influence or accessibility within the network is limited.
    \end{itemize}

    \item \textbf{PageRank Centrality}: Originally developed for web pages, PageRank assesses a node's importance based on the number and quality of its incoming links. A link from an important node carries more weight than a link from a less important one. It uses a damping factor $\beta$, representing the probability that a random walker will follow a link to an adjacent node, while $(1-\beta)$ is the probability they will jump to a random node in the graph, ensuring that all nodes receive some rank and preventing rank-sinking in disconnected components.
    \begin{itemize}
        \item \textit{High Value}: A node with high PageRank centrality receives many ``votes'' (incoming connections) from other important nodes. This indicates that significant entities within the network consider this node to be important or authoritative.
        \item \textit{Low Value}: A node with low PageRank centrality receives few incoming connections or is primarily linked by less important nodes. It is not widely recognized as important by other influential nodes in the network.
    \end{itemize}
\end{itemize}

The mathematical formulations for these graphon-based centrality measures are adopted directly from Avella-Medina et al.~\cite{avella2020centrality}, corresponding to equations (7), (8), (9), and (10) in their paper, respectively. For a detailed analysis, we focus on graphons 1 and 2, as specified in Table~\ref{table:groundtruth}. We compute both analytical and sample-based centrality measures, establishing these as baselines for comparison with our results. The analytical computations directly apply the aforementioned formulas from Avella-Medina et al.~\cite{avella2020centrality}. For the sample-based approach, we generate discrete graph instances by drawing samples from the ground truth graphon and subsequently compute the centrality measures within this discrete domain. Further details regarding each graphon are presented in the subsequent subsections.

\subsection{Graphon 1: The ($xy$) Model}

The analytical centrality measures formulas for this graphon are as follows:
\begin{itemize}
    \item \textbf{Degree Centrality:}
    $$ C^{d}(x) = \frac{x}{2} $$

    \item \textbf{Eigenvector Centrality:}
    $$ C^{e}(x) = \sqrt{3}x $$

    \item \textbf{Katz Centrality:}
    $$ C^{k}_{\alpha}(x) =  (6 - 2\alpha) + 3\alpha x $$
    
    \item \textbf{PageRank Centrality:}
    $$ C^{\text{pr}}_{\beta}(x) = (1-\beta) + 2\beta x $$
  
\end{itemize}
These measures are for the given latent variable $x \in [0,1]$, after computing its centrality vector, we normalize it before comparison with discrete graph centralities~\cite{avella2020centrality}.
Since the ordering for these experiments is important, we create a new dataset of 20 graphs with 100 nodes each, preserving the latent variables for all the nodes. The experiment results are illustrated in Figure~\ref{fig:xy_cent}. Our results show that centrality measures from the MomentNet-predicted graphon (blue lines in the figure) are close to the analytical computations (ground truth, black dashed lines). Furthermore, these graphon-based centralities by MomentNet also provide a good approximation for centrality measures computed over discrete graph samples (red dots).

\begin{figure}[H]
    \centering
    \includegraphics[width=0.9\linewidth]{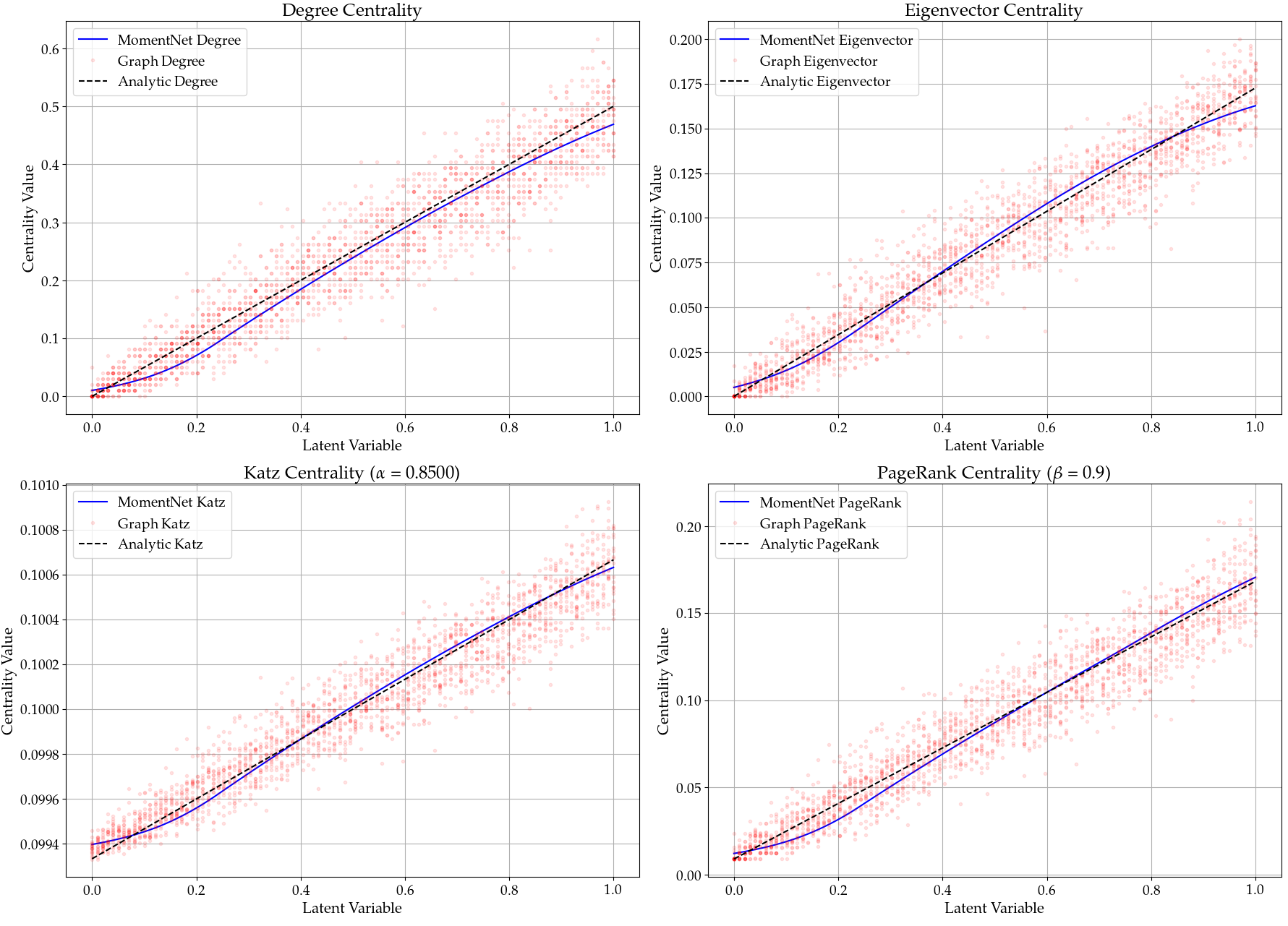}
    \caption{Centrality measures: MomentNet vs. analytic computation for the $xy$ graphon.}
    \label{fig:xy_cent}
\end{figure}

\subsection{Graphon 2: The ($e^{(-(x^{0.7}+y^{0.7}))}$) Model}
To test the generalizability and consistent performance of our method across varying complexities, we replicated the experiment on a more complex graphon. The analytical centrality measures formulas for this graphon are as follows:
\begin{itemize}
    \item \textbf{Degree Centrality:}
    $$ C^d(x) = 0.7492\,e^{-x^{0.7}} $$

    \item \textbf{Eigenvector Centrality:}
    $$ C^{e}(x) = \frac{e^{-x^{0.7}}}{\sqrt{0.473}} $$

    \item \textbf{Katz Centrality:}
    $$ C^{k}_{\alpha}(x) = 1 + \frac{0.7492\,\alpha\,e^{-x^{0.7}}}{1-0.473\,\alpha} $$

    \item \textbf{PageRank Centrality:}
    $$ C^{\text{pr}}_{\beta}(x,\beta) = (1-\beta) + \frac{\beta}{0.7492}\,e^{-x^{0.7}} $$
\end{itemize}

The experiment results are illustrated in Figure~\ref{fig:exp_cent}. Similar to the previous experiment, after computing the centrality measures on the graphon and analytically, we normalize them to compare them with the discrete graph measurement. As the plots show, similar to the previous graphon, our estimation is very close to the ground truth results obtained by analytical calculation.

\begin{figure}[ht!]
    \centering
    \includegraphics[width=0.9\linewidth]{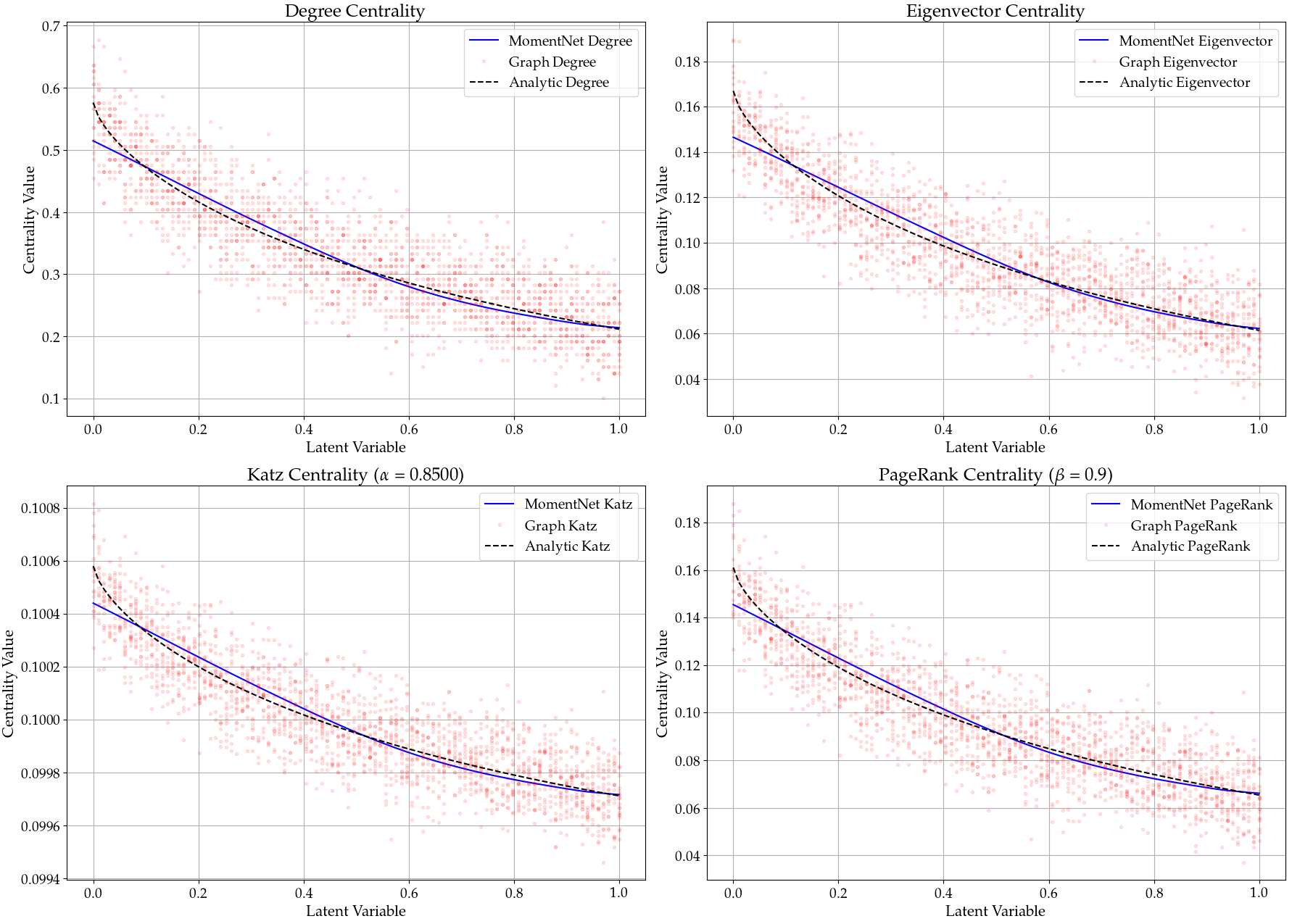}
    \caption{Centrality measures: MomentNet vs. analytic computation for the $e^{(-(x^{0.7}+y^{0.7}))}$ graphon.}
    \label{fig:exp_cent}
\end{figure}


\section{Extra Scalability Evaluations}\label{app:scale_exp}
We conducted an additional experiment to evaluate the scalability of SIGL and MomentNet. For this assessment, rather than focusing on SIGL's known weaknesses in latent variable estimation, we selected graphon number 5 from Table~\ref{table:groundtruth}, a model that both methods accurately estimate. We generate 10 graphs for each node size $n \in \{10, 20, \hdots, 810\}$.

Figure~\ref{fig:momentnet_scale_graphon5} illustrates the scalability of MomentNet and SIGL in terms of both performance, measured by GW loss, and average runtime, as a function of the number of nodes.
Subfigure (a) of Figure~\ref{fig:momentnet_scale_graphon5} reveals that MomentNet (blue line) maintains a consistently low GW loss across the tested range of node sizes, indicating stable performance. In contrast, SIGL's (red line) GW loss starts notably higher for smaller networks but decreases substantially as the number of nodes increases, eventually matching or even slightly outperforming MomentNet's loss for larger networks.

However, subfigure (b) of Figure~\ref{fig:momentnet_scale_graphon5} highlights a significant difference in computational efficiency: MomentNet's average runtime exhibits only a modest and gradual increase with the number of nodes. Conversely, SIGL's runtime escalates sharply, demonstrating significantly poorer scalability.

Consequently, while SIGL might offer a marginal advantage in GW Loss for very large graphs, MomentNet's vastly superior runtime scalability makes it a more practical and favorable approach, particularly for applications involving large-scale networks where computational resources and time are critical factors.

\begin{figure}[H] 
    \centering 

    \begin{subfigure}[b]{0.48\textwidth} 
        \centering
        \includegraphics[width=\linewidth]{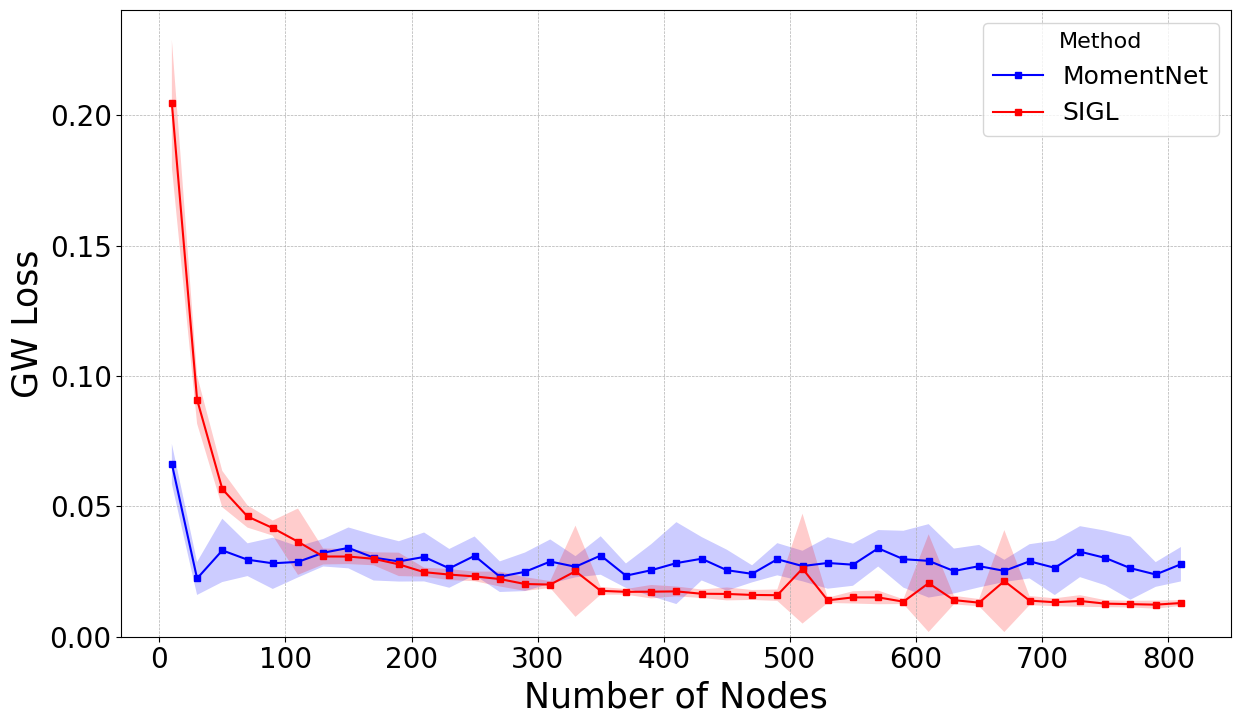} 
        \subcaption{Comparison of performance scalability of MomentNet with SIGL.}
        \label{fig:performance_scalability_graphon5} 
    \end{subfigure}
    \hfill 
    \begin{subfigure}[b]{0.48\textwidth} 
        \centering
        \includegraphics[width=\linewidth]{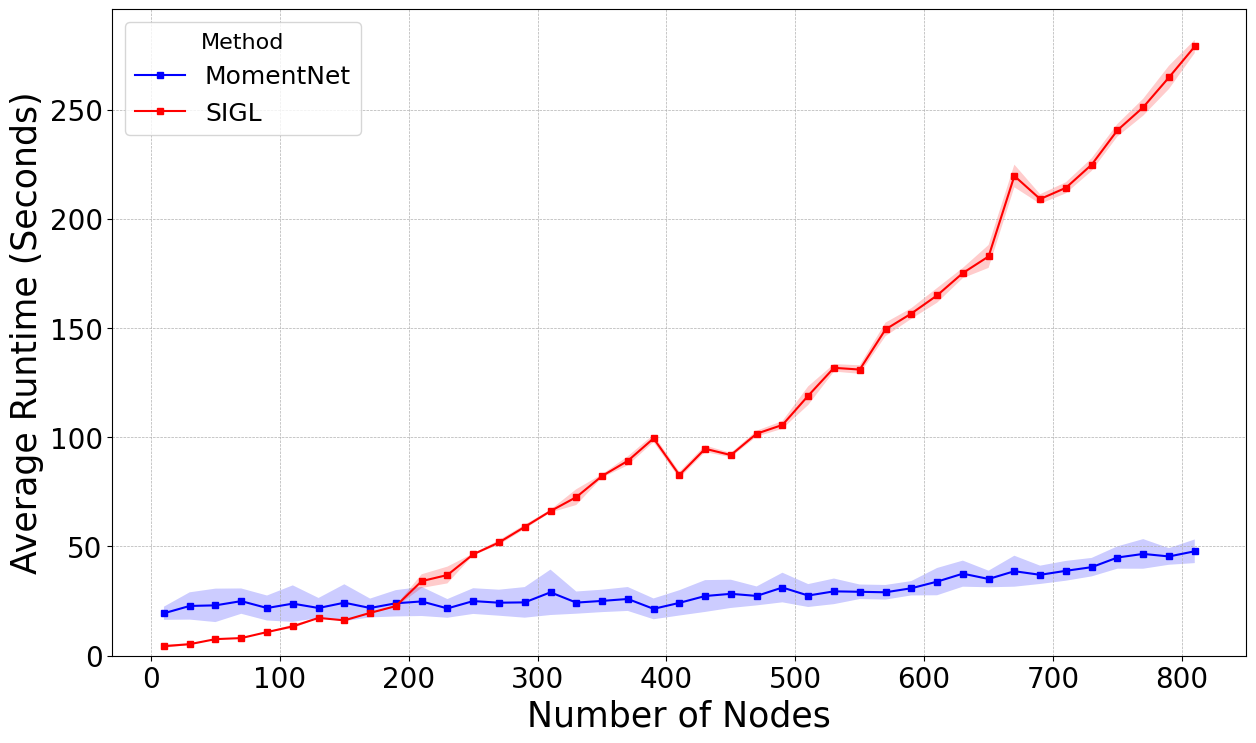} 
        \subcaption{Comparison of runtime scalability of MomentNet with SIGL.}
        \label{fig:runtime_scalability_graphon5} 
    \end{subfigure}

    \caption{Scalability Comparison of MomentNet and SIGL}    \label{fig:momentnet_scale_graphon5}
\end{figure}

\mycomment{For our scalability analysis, we compare our method against \textbf{SIGL}. 
We also present a scenario where \textbf{SIGL} exhibits limitations in accurately estimating graphons, even when the observed graph is large. This challenge could arise because \textbf{SIGL}'s performance primarily relies on the accurate estimation of latent variables. In the standard graphon model, graph edges are generated from a graphon $W$ through Bernoulli trials with edge probabilities $W(\eta_i, \eta_j)$. If these probabilities are consistently near $0.5$ across the graphon domain, the variance of the edge generation process, given by $W(\eta_i, \eta_j)(1-W(\eta_i, \eta_j))$, approaches its maximum. Such high variance can obscure the underlying latent structure, thereby complicating the recovery of latent variables from the observed graph. A graphon exemplifying this challenge is $W(\eta_i, \eta_j) = 0.5 + 0.1 \cos(\pi \eta_i) \cos(\pi \eta_j)$. This particular graphon, which maintains edge probabilities close to $0.5$, is visualized in Figure~\ref{fig:MomentNet}. 
\sms{Do not explain all these stuff first! Explain first the experiment and the result and THEN explain the reason why you see the results you see. Right now you start from the answer even before you showed me the result}
For our experimental evaluations, we generate 10 graphs for each node size $n \in \{10, 20, \ldots, 510\}$ and estimate the associated graphon as described next, and we report the averaged results over 10 realizations. We consider two distinct scenarios:
\sms{What if you only present the `multiple graph' version of the experiment? the curves for single do not seem to add too much and the discussion would be simpler}
\begin{itemize}
    \item \textbf{Single-Graph Estimation:} In this scenario, a graphon is estimated using a single graph of size $n$.
    \item \textbf{Multiple-Graph Estimation:} Here, we generate 10 graphs of size $n$ each, and obtain the target motif counts by averaging the individual motif counts of each graph.
    This approach allows the estimation method to leverage a more comprehensive set of samples from the underlying graphon in each estimation run compared to the single-graph setting.
\end{itemize}
The experiment results are provided in Figure~\ref{fig:performance_scalability}, and~\ref{fig:runtime_scalability}. In single-graph scenarios, \textbf{MomentNet}'s error is initially high for small graphs but decreases significantly as the number of nodes increases. Utilizing multiple graphs, however, enables \textbf{MomentNet} to achieve near-optimal performance, even on these small graphs. In both cases, this is due to a better estimation of the motif densities, which agrees with the intuition developed in the theory section that a larger sample size, both in terms of graph size and number of graphs, leads to a better performance of our method.
While \textbf{SIGL} demonstrates a similar dependency on node count in the single-graph setting, its performance does not substantially improve with the use of multiple graphs; it offers only slight gains, primarily for small graphs \sms{Any explanation why?}.
Consequently, \textbf{SIGL}'s overall performance is inferior to that of \textbf{MomentNet}.
Regarding the runtime, Figure~\ref{fig:runtime_scalability} reveals that the average runtime of \textbf{MomentNet (Multiple)}, while exhibiting variance, scales more favorably with an increasing number of nodes compared to \text{SIGL (Multiple)}.
Specifically, \textbf{MomentNet (Multiple)} demonstrates a clear runtime advantage over \text{SIGL (Multiple)} when the number of nodes becomes large (exceeding approximately $300$ in the depicted data).
In contrast, for scenarios involving a single graph, \text{SIGL (Single)} consistently maintains a lower average runtime than \textbf{MomentNet (Single)} across all tested node counts.
The notable variance in runtime for the MomentNet approaches (\textbf{MomentNet (Multiple)} and \textbf{MomentNet (Single)}), represented by the shaded areas, is attributed to the early stopping criteria employed during the training process.
A more detailed analysis of computational complexity is provided in Appendix~\ref{app:complexity}.
\sms{Why are we putting all these things in bolded font? No need to do that}}


\section{MomentMixup Evaluation Details}\label{app:mixup_details}

Our experimental evaluation is conducted on four diverse benchmark datasets widely used in graph classification research.
Table \ref{tab:dataset_descriptions} provides a detailed overview of these datasets, outlining their specific characteristics and the nature of their respective classification tasks.

\begin{table}[H]
\centering
\caption{Description of the benchmark datasets used for evaluation. Each dataset represents a different type of graph structure and classification task.}
\label{tab:dataset_descriptions}
\begin{tabular}{p{0.15\textwidth} p{0.5\textwidth} p{0.2\textwidth} p{0.1\textwidth}}
\toprule
\textbf{Dataset} & \textbf{Description} & \textbf{Classification Task} & \textbf{Citation} \\
\midrule
IMDB-B & Movie collaboration graphs; nodes represent actors/actresses, and an edge connects two actors/actresses if they appear in the same movie. & Binary genre classification. & ~\cite{datasets} \\
\midrule
IMDB-M & A multi-class version of IMDB-B, representing movie collaborations with similar graph construction. & Multi-class genre classification. & ~\cite{datasets} \\
\midrule
REDD-B & Social network graphs from Reddit; nodes represent users, and an edge indicates an interaction (e.g., one user commented on another's post). & Binary community (subreddit) classification. & ~\cite{datasets} \\
\midrule
AIDS & Bioinformatics graphs representing molecules; nodes are atoms, and edges are covalent bonds between them. & Binary classification based on anti-HIV activity (active vs. inactive). & ~\cite{AIDS} \\
\bottomrule
\end{tabular}
\end{table}


\end{document}